\documentclass[10pt,twocolumn,letterpaper]{article}

\usepackage{cvpr}              %

\usepackage[dvipsnames]{xcolor}

\definecolor{cvprblue}{rgb}{0.21,0.49,0.74}
\usepackage[pagebackref,breaklinks,colorlinks,citecolor=cvprblue]{hyperref}

\usepackage[accsupp]{axessibility} %

\usepackage{times}
\usepackage{epsfig}
\usepackage{graphicx}
\usepackage{amsmath}
\usepackage{amsthm}
\usepackage{amssymb}
\usepackage{bm}
\usepackage[normalem]{ulem}
\usepackage[dvipsnames]{xcolor}
\usepackage{colortbl}
\usepackage{algorithm2e}

\newcommand{\nn}{\nonumber}
\newcommand{\cov}{\mathbb{C}\text{ov}}
\newcommand{\indep}{\perp\!\!\!\perp}
\newtheorem{theorem}{Theorem}

\newtheorem{theorem1}{Theorem}

\def\etal{{\it et al. }}
\theoremstyle{definition}
\newtheorem{definition}{Definition}

\newcommand{\Tr}[1]{\text{Tr}\left\{#1\right\}}

\DeclareMathOperator*{\arginf}{arg\,inf}

\usepackage{times}
\usepackage{epsfig}
\usepackage{graphicx}
\usepackage{amsmath}
\usepackage{amsthm}
\usepackage{amssymb}
\usepackage{bm}
\usepackage[font={small}]{caption}
\usepackage{subcaption}
\usepackage{multirow}
\usepackage{rotating}

\usepackage{changepage}
\usepackage{pgfplotstable} 
\usepgfplotslibrary{fillbetween}
\usepackage{pgfplots}

\usepackage{tikz}

\newcommand{\methodName}[0]{U-FaTE} %

\newcommand\scalemath[2]{\scalebox{#1}{\mbox{\ensuremath{\displaystyle #2}}}}

\usepackage{fp} %

\usepackage{listofitems} %

\usepackage[mode=buildnew]{standalone}%
\usepackage{tikz}

\usepackage{pgfmath}

\usepackage{amsfonts}
\usepackage{amsmath}
\usepackage{times}

\usepackage{pgfplots}
\usepgfplotslibrary{patchplots}
\usepgfplotslibrary{fillbetween}
\usetikzlibrary{plotmarks}
\usepackage[skins]{tcolorbox}

\usepackage{svg}

\pgfplotsset{compat=1.18}

\begin{document}

\title{Utility-Fairness Trade-Offs and How to Find Them}

\author{Sepehr Dehdashtian \quad Bashir Sadeghi \quad Vishnu Naresh Boddeti\\
Michigan State University\\
{\tt\small \{sepehr, sadeghib, vishnu\}@msu.edu}
}

\maketitle

\definecolor{fillcolor}{HTML}{DEDEFF}  %
\definecolor{fillcolor2}{HTML}{CFD3D8} %
\definecolor{fillcolor4}{HTML}{f2f3f5} %
\definecolor{fillcolor5}{HTML}{f7f8fa} %
\definecolor{fillcolor3}{HTML}{FFE342} %
\definecolor{fillcolor6}{HTML}{ff8f8f} %
\definecolor{fillcolor7}{HTML}{54c45e} %
\definecolor{fillcolor8}{HTML}{ff9933} %

\definecolor{colorbox}{HTML}{f21654} %

\begin{abstract}
When building classification systems with demographic fairness considerations, there are two objectives to satisfy: 1) maximizing utility for the specific task and 2) ensuring fairness w.r.t. a known demographic attribute. These objectives often compete, so optimizing both can lead to a trade-off between utility and fairness. While existing works acknowledge the trade-offs and study their limits, two questions remain unanswered: 1) What are the optimal trade-offs between utility and fairness? and 2) How can we numerically quantify these trade-offs from data for a desired prediction task and demographic attribute of interest? This paper addresses these questions. We introduce two utility-fairness trade-offs: the Data-Space and Label-Space Trade-off. The trade-offs reveal three regions within the utility-fairness plane, delineating what is fully and partially possible and impossible. We propose \methodName{}, a method to numerically quantify the trade-offs for a given prediction task and group fairness definition from data samples. Based on the trade-offs, we introduce a new scheme for evaluating representations. An extensive evaluation of fair representation learning methods and representations from over 1000 pre-trained models revealed that most current approaches are far from the estimated and achievable fairness-utility trade-offs across multiple datasets and prediction tasks.
\end{abstract}

\section{Introduction\label{sec:intro}}

As learning-based systems are increasingly being deployed in high-stakes applications, there is a dire need to ensure that they do not propagate or amplify any discriminative tendencies inherent in the training datasets. An ideal solution would impart fairness to prediction models while retaining the performance of the same model when learned without fairness considerations.

\begin{figure}[ht]
    \centering
        \subcaptionbox{\label{fig:teaser-ideal}}[0.48\linewidth]{
            \centering
            \includegraphics[width=0.8\linewidth]{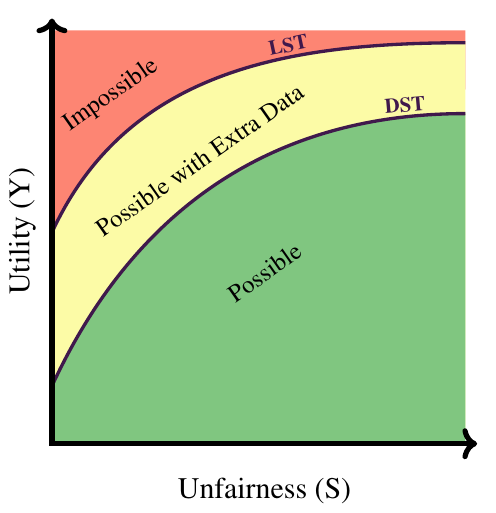}
        }
        \subcaptionbox{\label{fig:teaser-real}}[0.48\linewidth]{
            \centering
            \includegraphics[width=1.0\linewidth]{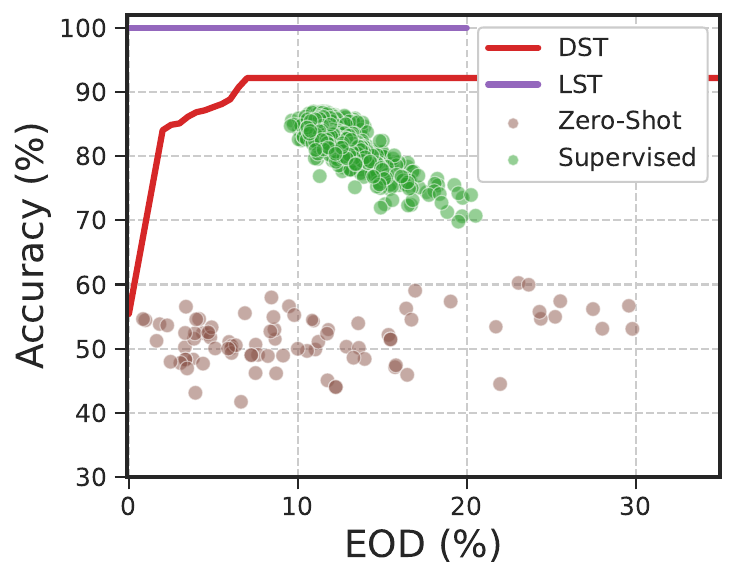}
        }
    \vspace{-0.3cm}
    \caption{\textbf{The utility-fairness trade-offs.} (a) Classification systems can be evaluated by their utility (e.g., accuracy) w.r.t. a target label $Y$ and their unfairness w.r.t. a demographic label $S$. We introduce two trade-offs, \emph{Data Space Trade-Off} (DST) and \emph{Label Space Trade-Off} (LST). (b) We empirically estimate DST and LST on CelebA and evaluate the utility (high cheekbones) and fairness (gender \& age) of over 100 zero-shot and 900 supervised models. \label{fig:trade-offs}}
    \vspace{-1.8em}
\end{figure}

Realizing this goal necessitates optimizing two objectives: maximizing utility in predicting a label $Y$ for a target task (e.g., face identity) while minimizing the unfairness w.r.t. a demographic attribute $S$ (e.g., age or gender). However, when the statistical dependence between $Y$ and $S$ is not negligible, learning with fairness considerations will necessarily degrade the performance of the target predictor, i.e., \emph{a trade-off will exist between utility and fairness}.

The existence of a utility-fairness trade-off has been well established, theoretically~\cite{menon2018cost, zhao2019inherent, zhao2021costs, gouic2020projection, sadeghi2022on} and empirically~\cite{sadeghi2022on}, in multiple prior works. However, the focus of this body of work has been limited in multiple respects. First, prior work~\cite{sadeghi2022on, zhao2021costs} focused on just one type of trade-off, ignoring other possible trade-offs between utility and fairness. Second, prior work~\cite{zhao2019inherent, zhao2021costs} focused on establishing bounds or identifying the end-points of the trade-off of interest rather than attempting its precise characterization. Third, the majority of the prior work~\cite{gouic2020projection, zhao2019inherent, zhao2021costs, sadeghi2022on} has investigated the utility-fairness trade-offs for one definition of group fairness, namely, demographic parity (DP). There are multiple fairness definitions~\cite{barocas2019fairness}, including those more practically relevant than DP, such as Equalized Opportunity (EO), for which the trade-offs have not been studied.

Despite these attempts, several questions related to the utility-fairness trade-offs remain outstanding.
\begin{tcolorbox}[left=0pt,right=0pt,top=0pt,bottom=0pt,colback=colorbox!5!white,colframe=colorbox!75!black]
    \begin{enumerate}
        \item \emph{What are the optimal utility-fairness trade-offs?}
        \item \emph{For a given prediction task and a demographic attribute, we wish to be fair w.r.t., how can we empirically estimate the trade-offs from data?}
    \end{enumerate}
\end{tcolorbox}

Addressing these questions by identifying and quantifying the trade-offs is the primary goal of this paper. The trade-offs are a function of the data triplet $(X, Y, S)$, where $X$ is the data (e.g., images), $Y$ is the target label, and $S$ is the sensitive demographic label. Figure~\ref{fig:trade-offs} illustrates the plausible trade-offs, their empirical estimation on CelebA~\cite{liu2015deep}, and their utility in empirically evaluating representations from pre-trained models.

\vspace{3pt}
\noindent\textbf{Identifying Trade-Offs (\S\ref{sec:trade-offs}).} We identify two trade-offs: the \emph{Label-Space Trade-Off} (LST) and \emph{Data-Space Trade-Off} (DST). They can be defined for \emph{any} group fairness definitions that can be expressed via \emph{independence} and \emph{separation} relations~\cite{barocas2019fairness}. The LST corresponds to the trade-off obtained by an \emph{oracle fair classifier} that depends only on the distributions of $Y$ and $S$. Similarly, DST is the trade-off obtained by an \emph{optimally learned fair classifier} and depends on $(X, Y, S)$. By definition, LST necessarily dominates DST since it does not depend on the data $X$.

The trade-offs divide the utility-fairness plane into three regions shown in Fig.~\ref{fig:teaser-ideal}. A \emph{possible} region that can be attained by algorithms learned on the given data, a \emph{possible with extra data} region that can be attained by learning on data beyond the given data, and an \emph{impossible} region that cannot be attained by any algorithmic scheme due to the inherent dependence between the distributions of $Y$ and $S$.

\vspace{3pt}
\noindent\textbf{Quantifying Trade-Offs (\S\ref{sec:approach}).} Characterizing the exact trade-offs from data for a given task, demographic attribute, and fairness definition affords multiple benefits. It will allow researchers and practitioners to identify the achievable solution space for the task, evaluate how far a given predictor is from the optimal solution, and identify performance gaps and trends among existing solutions. To this end, we propose \methodName{} (\underline{U}tility-\underline{Fa}irness \underline{T}rade-Off \underline{E}stimator), a method for quantifying the trade-offs from data triplets numerically. \methodName{} is an end-to-end model that adopts a statistical dependence measure as a proxy for utility and fairness and optimizes their weighted linear combination. \methodName{} can be flexibly adapted to estimate both the DST and LST from a finite labeled dataset.

\vspace{3pt}
\noindent\textbf{Usefulness of Trade-Offs (\S\ref{sec:metrics}).} The trade-offs illuminate the fundamental limits of learning algorithms in mitigating unfairness and present a new avenue to evaluate a given image representation in terms of its distance from the estimated trade-offs. We adopt this scheme to evaluate the representations of over 900 supervised and 100 zero-shot publicly available pre-trained models, derive insights, and identify trends and models that are close and far from the empirical trade-off estimates (\S\ref{sec:experiments}).

\vspace{3pt}
\noindent\textbf{Notation:} We denote scalars using lowercase letters, e.g., $d$ and $\lambda$. We denote deterministic vectors by boldface lowercase letters, e.g., $\bm x$, $\bm y$. Both scalar-valued and multidimensional random variables (RV)s are denoted by regular upper case letters, e.g., $X$, $Y$. We denote deterministic matrices by boldface upper case letters, e.g., $\bm K$, $\bm \Theta$. Finite or infinite sets are denoted by calligraphic letters, e.g., $\mathcal A$, $\mathcal H$.

\section{Related Works\label{sec:related-work}}
A vast majority of prior work on designing fair classifiers focused primarily on uncovering disparities in practical tasks~\cite{buolamwini2018gender, wang2019balanced} and learning a fair predictor~\cite{zemel2013learning, madras2018learning, wang2020towards} for a given fairness measure. An extended discussion of this body of work can be found in the supplementary material.

\vspace{3pt}
\noindent\textbf{Utility-Fairness Trade-Offs:} Many attempts on learning fair models~\cite{madras2018learning, wang2019balanced, wang2020towards, gong2021mitigating} ignored the existence of trade-offs. They sought to maximize accuracy on target tasks while minimizing unfairness, thus perhaps seeking an infeasible solution. Most studies on utility-fairness trade-offs are theoretical and under restricted settings in terms of the type of labels, notion of fairness, and bounds or extreme limits of trade-offs. For example, Zhao~\etal~\cite{zhao2019trade} obtained a lower bound on DST when both $Y$ and $S$ are binary labels. McNamara~\etal~\cite{mcnamara2019costs} provided both upper and lower bounds for binary labels. Only a couple of attempts~\cite{sadeghi2019global, sadeghi2022on} have been made to numerically estimate utility-fairness trade-offs for \emph{independence} related-based measures like demographic parity, both of them on features from pre-trained models, rather than raw data. Sadeghi~\etal~\cite{sadeghi2019global} obtained a simplified version of DST, but for linear models. Later on, in the context of invariant representation learning, this was extended to estimate a near-optimal DST-like trade-off called $\mathcal T_{\text{Opt}}$ in~\cite{sadeghi2022on}. But as we demonstrate in \S\ref{sec:experiments:results-frl}, the estimate of $\mathcal T_{\text{Opt}}$ called K-$\mathcal T_{\text{Opt}}$ does not span the entire trade-off.

In contrast to this body of work, we identify two types of trade-offs, DST and LST, and propose a method to numerically quantify them from data. Our trade-offs and their empirical estimates apply to a wide range of prediction tasks for two different categories of fairness notions without any restrictions on the type of labels.

\vspace{3pt}
\noindent\textbf{Learning Fair Classifiers:} Over the last decade, many methods have been developed for learning fair classifiers. These approaches follow the template of adopting a fairness constraint as a regularizer in addition to the objective for the target task. The approaches differ in the choice of measure as a proxy for quantifying the level of unfairness between the target label $Y$ and the prediction $\hat{Y}$, and the associated optimization technique. From an optimization perspective, they can be classified into three major categories--i.e., iterative adversarial methods (ARL\cite{xie2017controllable}, SARL\cite{sadeghi2019global}, and MaxEnt-ARL\cite{roy2019mitigating}), non-iterative adversarial methods (FairHSIC~\cite{quadrianto2019discovering}, OptNet-ARL~\cite{sadeghi2021adversarial}), and closed-form solver methods (SARL~\cite{sadeghi2019global}, K-$\mathcal T_{\text{Opt}}$~\cite{sadeghi2022on}, LEACE~\cite{belrose2023leace}, FairerCLIP~\cite{dehdashtian2024fairerclip}). Among these, ARL, SARL, MaxEnt-ARL, and OptNet-ARL measure mean dependence~\cite{grari2020learning, adeli2021representation}, FairHSIC, FairerCLIP and K-$\mathcal T_{\text{Opt}}$ measure full statistical dependence, i.e., all modes of dependence, and SARL and LEACE measures linear dependence. %

\methodName{} draws inspiration from K-$\mathcal T_{\text{Opt}}$~\cite{sadeghi2022on}. By using a closed-form solver and a universal dependence measure that captures all non-linear dependencies, K-$\mathcal T_{\text{Opt}}$ achieves a better utility-fairness trade-off and is more stable than the other fair learning methods discussed above. However, K-$\mathcal T_{\text{Opt}}$ is limited in multiple respects and cannot be directly employed for estimating the trade-offs. 1) It operates on features and does not generalize to learning directly from high-dimensional raw data representations such as pixels for images. 2) K-$\mathcal T_{\text{Opt}}$ optimizes an unconditional dependence measure, which limits its applicability to \emph{independence} relation-based fairness definitions such as demographic parity. 3) As we demonstrate in \S\ref{sec:experiments:results-frl}, for demographic parity, K-$\mathcal T_{\text{Opt}}$'s trade-off is the closest to our DST estimate, but it does not span the entire utility-fairness trade-off front. Therefore, we adopt the positive aspects of K-$\mathcal T_{\text{Opt}}$, namely universal dependence measure and closed-form solver, into \methodName{} and overcome its drawbacks.

\section{The Utility-Fairness Trade-Offs \label{sec:trade-offs}}

\noindent\textbf{Fairness Notions:\label{sec:background:diff-def-fairness}} Group fairness notions are typically categorized into three classes~\cite{barocas2019fairness}, namely \emph{independence}, \emph{separation}, and \emph{sufficiency}, each corresponding to different societal desiderata. We focus on the \emph{idependence} and \emph{separation} relations, which can be expressed as independence ($\hat{Y}\indep S$) and conditional independence ($\hat{Y}\indep S|Y=y$) relations, respectively.

We consider frequently used fairness criteria including Demographic Parity (DP)~\cite{kilbertus2017avoiding} which is an example of an \emph{independence} relation, and Equalized Opportunity (EO)~\cite{hardt2016equality}, and Equality of Odds (EOO)~\cite{hardt2016equality} which are both examples of \emph{separation} relations. The corresponding unfairness metrics are Demographic Parity Violation $DPV:= |P(\hat{Y}=1|S=0) - P(\hat{Y}=1|S=1)|$, Equalized Opportunity Difference $EOD:= |P(\hat{Y}=1|Y=1, S=0) - P(\hat{Y}=1|Y=1, S=1)|$, and Equality of Odds Difference $EOOD := \frac{1}{2} \sum_{y \in \{0, 1\}}|P(\hat{Y}=1|Y=y, S=0) - P(\hat{Y}=1|Y=y, S=1)|$, respectively.

We now introduce the \emph{Data-Space Trade-Off} and the \emph{Label-Space Trade-Off}. In both, we employ a dependence measure $\mathrm{Dep}(\cdot, \cdot|\cdot)$ to enforce the \emph{independence} and \emph{separation} based fairness constraints. The function $\mathrm{Dep}(\cdot, \cdot|\cdot)\geq 0$ is a parametric or non-parametric measure of statistical dependence. $\mathrm{Dep}(P, Q|R=r) = 0$ implies that conditioned on $R=r$, the random variables (RVs) $P$ and $Q$ are independent. $\mathrm{Dep}(P, Q|R=r) > 0$ means that conditioned on $R=r$, $P$, and $Q$ are dependent, with larger values indicating larger degrees of dependence. When $r$ is the empty set $\emptyset$, we assume that $\mathrm{Dep}(P, Q|R=\emptyset)$ simply reduces to the unconditional dependence $\mathrm{Dep}(P, Q)$.

\begin{tcolorbox}[left=0pt,right=0pt,top=0pt,bottom=0pt,colback=colorbox!5!white,colframe=colorbox!75!black,title=\begin{definition}\label{def:dst}\emph{Data Space Trade-Off} (DST)\end{definition}]
\footnotesize{
\begin{eqnarray}\label{eq:data}
f^{DST}_{\lambda} := \arginf_{f\in \mathcal H_X} \Big\{(1-\lambda)\inf_{g_Y\in \mathcal H_Y }\mathbb E_{X,Y}\left[ \mathcal L_Y\left (g_Y\left(\bm f(X)\right), Y \right)\right] \nn\\
+ \lambda\, \mathrm{Dep}\big(\bm f(X), S | Y=y\big) \Big\}, \quad 0\le\lambda<1\nn
\end{eqnarray}}
\end{tcolorbox}
\noindent Here $f$ is the encoder that maps data $X$ to a representation $Z$, and $g$ is a classifier that predicts $\hat{Y}$ from $Z$. $\mathcal{H}_X$ and $\mathcal{H}_Y$ are the hypothesis classes of functions for $f$ and $g$ respectively. $\mathcal{L}_Y(\cdot, \cdot)$ is the loss function corresponding to the utility, and $\lambda$ controls the trade-off between utility and fairness, i.e., $\lambda=0$ corresponds to ignoring the fairness constraint and only optimizing the utility, while, $\lambda\rightarrow 1$ corresponds to the total fairness. The outcome $f^{DST}_{\lambda}$ corresponds to the encoder for a given value of $\lambda$. This definition corresponds to the \textbf{DST} curve in \cref{fig:teaser-ideal}, where the utility-fairness plane below the DST corresponds to the region achievable by algorithms designed for this prediction task that learn from the data triplet $(X, Y, S) \sim p(X, Y, S)$.

\begin{tcolorbox}[left=0pt,right=0pt,top=0pt,bottom=0pt,colback=colorbox!5!white,colframe=colorbox!75!black,title=\begin{definition}\label{def:lst}\emph{Label Space Trade-Off} (LST)\end{definition}]
\footnotesize{
\begin{eqnarray}\label{eq:label}
Z^{LST}_{\lambda} := \arginf_{Z\in L^2}\Big\{(1-\lambda) \inf_{g_Y \in \mathcal H_Y}\mathbb E_{Y}\Big[ \mathcal L_Y\big (g_Y(Z), Y \big)\Big] \nn\\ 
+\lambda\, \mathrm{Dep}\big(Z, S  | Y=y \big) \Big\}, \quad 0\le\lambda<1\nn
\end{eqnarray}}
\end{tcolorbox}
\noindent Here $L^2$ is the space of all square-integrable RVs \big(i.e. $\mathbb E_{Z}\left[\bm \|Z\|^2\right]<\infty$\big) in the probability space generated by the joint RV $(Y, S)$. LST corresponds to the trade-off from an \emph{ideal} representation space $Z^{LST}_{\lambda}$ that is not constrained to be learned from the input data $X$. It is the trade-off \emph{inherent} to the task itself and is the best that \emph{any} algorithm can hope to achieve for this task. Therefore, it necessarily dominates (or is equal to) DST in \cref{def:dst}. This definition corresponds to the LST curve in \cref{fig:teaser-ideal}, where the utility-fairness plane above the LST corresponds to the region that \emph{any} algorithm cannot achieve.

We stress that the above trade-offs are intrinsic to the underlying data, specifically the underlying distributions that generated that data. So, \emph{the trade-offs are a property of the data, not of any particular learning algorithm}.
\FPset\figlinewidth{1.0}
\FPset\figlinewidthsmaller{0.7}

\begin{figure*}[!ht]
  \centering
  \resizebox{0.48\linewidth}{!}{
  \begin{tikzpicture}[node distance=3cm, every node/.style={font=\sffamily}]
    \node[anchor=east, yshift=-3em](input2){\includegraphics[width=2.5cm]{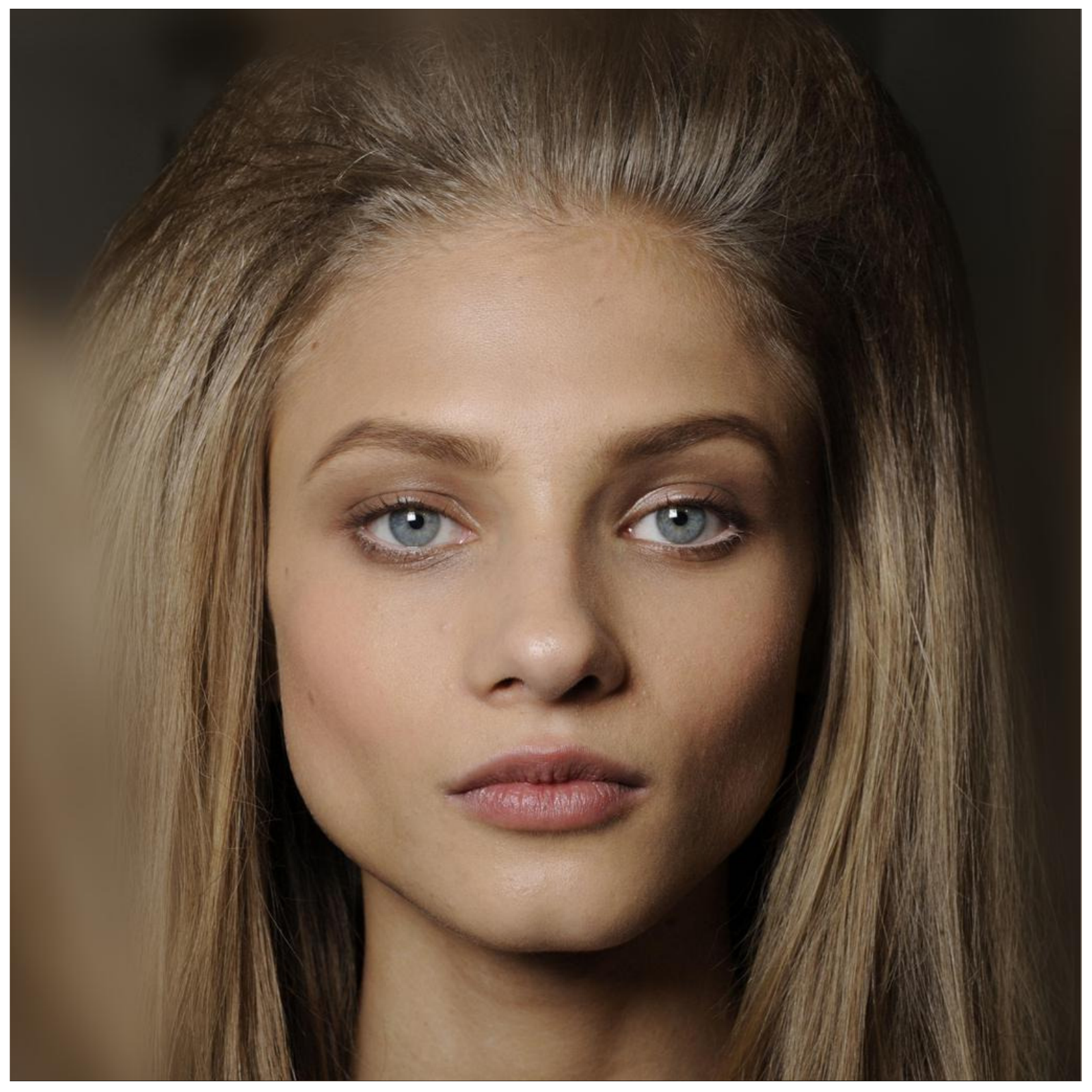}};   
    \node[above of=input2, yshift=-3em]{Input};
    
    \node[draw=black, dash pattern=on 10pt off 5pt, fill=fillcolor5, line width=\figlinewidth pt, rectangle, rounded corners, align=center, anchor=center, xshift=5em, right of=input2, minimum height=3.5cm, minimum width=5.5cm] (box2) {};
    
    \node[draw=black, fill=fillcolor, line width=\figlinewidth pt, rectangle, rounded corners, align=center, anchor=west, minimum height=3cm, minimum width=1cm] (fe2) at ([xshift=0.9em]box2.west) {Feature\\Extractor\\$\bm \Theta_{FE}$};
    
    \node[anchor=south, align=center] () at ([yshift=0.5em, xshift=0em]box2.north){\methodName};

    \node[draw=black, fill=fillcolor, line width=\figlinewidth pt, rectangle, rounded corners, align=center, anchor=west, minimum height=1.3cm, minimum width=0.5cm] (enc) at ([yshift=0em, xshift=1.2em]fe2.east) {$\bm f(.;\bm \Theta^*_{Enc})$}; %
    \node[anchor=south, align=center] () at ([yshift=0.5em, xshift=0em]enc.north){Fair Encoder};

    \node[draw=black, fill=fillcolor3, line width=\figlinewidth pt, rectangle, rounded corners, align=center, anchor=west, minimum height=2.5cm, minimum width=0.5cm] (z) at ([yshift=0em, xshift=1.2em]enc.east) {$\bm Z$};

    \node[draw=black, fill=fillcolor5, line width=\figlinewidth pt, rectangle, rounded corners, align=center, anchor=west, minimum height=3.0cm, minimum width=2.0cm] (box) at ([xshift=2em, yshift=0]box2.east){};
    
    \node[above of=box, yshift=-3em]{Classifier};

    \node[draw=black, fill=fillcolor, line width=\figlinewidth pt, circle, rounded corners, align=center, anchor=center, minimum height=0.5cm] (n11) at ([yshift=3em, xshift=1.5em]box.west) {};
    \node[draw=black, fill=fillcolor, line width=\figlinewidth pt, circle, rounded corners, align=center, anchor=center, minimum height=0.5cm] (n12) at ([yshift=-2em, xshift=0em]n11) {};
    \node[draw=black, fill=fillcolor, line width=\figlinewidth pt, circle, rounded corners, align=center, anchor=center, minimum height=0.5cm] (n13) at ([yshift=-2em, xshift=0em]n12) {};
    \node[draw=black, fill=fillcolor, line width=\figlinewidth pt, circle, rounded corners, align=center, anchor=center, minimum height=0.5cm] (n14) at ([yshift=-2em, xshift=0em]n13) {};
    
    \node[draw=black, fill=fillcolor, line width=\figlinewidth pt, circle, rounded corners, align=center, anchor=center, minimum height=0.5cm] (n21) at ([yshift=0em, xshift=3em]n12) {};
    \node[draw=black, fill=fillcolor, line width=\figlinewidth pt, circle, rounded corners, align=center, anchor=center, minimum height=0.5cm] (n22) at ([yshift=0em, xshift=3em]n13) {};

    \node[right of=box, xshift=-3.0em] (Y) {$\hat{Y}$};

    \draw[line width=\figlinewidth pt, transform canvas={yshift=0mm},draw] [->] (input2) -- (box2);
    \draw[line width=\figlinewidth pt, transform canvas={yshift=0mm},draw] [->] (box2) -- (box);
    \draw[line width=\figlinewidth pt, transform canvas={yshift=0mm},draw] [->] (box) -- (Y);

    \draw[line width=\figlinewidth pt, transform canvas={yshift=0mm},draw] [->] (fe2) -- (enc);
    \draw[line width=\figlinewidth pt, transform canvas={yshift=0mm},draw] [->] (enc) -- (z);

    \draw[line width=\figlinewidthsmaller pt, transform canvas={yshift=0mm},draw] [-] (n11) -- (n21);
    \draw[line width=\figlinewidthsmaller pt, transform canvas={yshift=0mm},draw] [-] (n11) -- (n22);
    
    \draw[line width=\figlinewidthsmaller pt, transform canvas={yshift=0mm},draw] [-] (n12) -- (n21);
    \draw[line width=\figlinewidthsmaller pt, transform canvas={yshift=0mm},draw] [-] (n12) -- (n22);
    
    \draw[line width=\figlinewidthsmaller pt, transform canvas={yshift=0mm},draw] [-] (n13) -- (n21);
    \draw[line width=\figlinewidthsmaller pt, transform canvas={yshift=0mm},draw] [-] (n13) -- (n22);
    
    \draw[line width=\figlinewidthsmaller pt, transform canvas={yshift=0mm},draw] [-] (n14) -- (n21);
    \draw[line width=\figlinewidthsmaller pt, transform canvas={yshift=0mm},draw] [-] (n14) -- (n22);

  \end{tikzpicture}
  }
\resizebox{0.35\linewidth}{!}{
\begin{tikzpicture}[node distance=3cm, every node/.style={font=\sffamily}]
    \node[anchor=south, align=center](input) {$\Tilde{X}$};  
    \node[anchor=north, align=center, rectangle, rounded corners, fill=fillcolor5, minimum height=2.7cm, minimum width=7cm](box) at ([yshift=-2em]input.south) {};  

    \node[draw=black, fill=fillcolor7, line width=\figlinewidth pt, rectangle, rounded corners, align=center, anchor=north, minimum height=0.5cm, minimum width=0.5cm] (depy) at ([yshift=-1em, xshift=-5em]box.north) {$Dep(\bm f(\Tilde{X};\bm \Theta), Y)$};
    \node[draw=black, fill=fillcolor6, line width=\figlinewidth pt, rectangle, rounded corners, align=center, anchor=north, minimum height=0.5cm, minimum width=0.5cm] (deps) at ([yshift=-1em, xshift=5em]box.north) {$Dep(\bm f(\Tilde{X};\bm \Theta), S)$};

    \node[draw=black, fill=fillcolor, line width=\figlinewidth pt, rectangle, rounded corners, align=center, anchor=north, minimum height=1cm, minimum width=0.5cm] (cfs) at ([yshift=-4em, xshift=0em]box.north) {Closed-Form Solver};

    \node[align=center, anchor=north] (out) at ([yshift=-2em]box.south) {$\bm \Theta^*_{Enc}$};
    
    \draw[line width=\figlinewidthsmaller pt, transform canvas={yshift=0mm},draw] [->] (input) -- (box);
    \draw[line width=\figlinewidthsmaller pt, transform canvas={yshift=0mm},draw] [->] (deps) -- (cfs);
    \draw[line width=\figlinewidthsmaller pt, transform canvas={yshift=0mm},draw] [->] (depy) -- (cfs);
    \draw[line width=\figlinewidthsmaller pt, transform canvas={yshift=0mm},draw] [->] (box) -- (out);

     \def\w {0.35}
     \node[draw=black, fill=fillcolor, line width=\figlinewidth pt, rectangle, align=center, anchor=south, minimum height=\w cm, minimum width=\w cm] (leg2) at ([yshift=1.8em, xshift=1.5em]box.east) {};
     \node[right] (txt2) at ([yshift=0em, xshift=0em]leg2.east) {Trainable Params};
     
     \node[draw=black, fill=fillcolor3, line width=\figlinewidth pt, rectangle, align=center, anchor=south, minimum height=\w cm, minimum width=\w cm] (leg3) at ([yshift=-1.5em, xshift=0em]leg2.south) {};
     \node[right] (txt3) at ([yshift=0em, xshift=0em]leg3.east) {Feature Space};
     
     \node[draw=black, fill=fillcolor6, line width=\figlinewidth pt, rectangle, align=center, anchor=south, minimum height=\w cm, minimum width=\w cm] (leg5) at ([yshift=-1.5em, xshift=0.0em]leg3.south) {};
     \node[right] (txt5) at ([yshift=0em, xshift=0em]leg5.east) {$\bm \downarrow$};
     
     \node[draw=black, fill=fillcolor7, line width=\figlinewidth pt, rectangle, align=center, anchor=south, minimum height=\w cm, minimum width=\w cm] (leg6) at ([yshift=-1.5em, xshift=0.0em]leg5.south) {};
     \node[right] (txt6) at ([yshift=0em, xshift=0em]leg6.east) {$\bm \uparrow$};
     
  \end{tikzpicture}
  }
  \caption{\textbf{Overview of \methodName{}:} (Left) It comprises two components, a feature extractor and a fair encoder, that are trained end-to-end. Once \methodName{} is trained, the MLP classifier is trained to predict $Y$ from which fairness metrics can be computed. (Right) The fair encoder parameters are optimized through a closed-form solver operating on the features from the feature extractor. See text for more details.}
  \label{fig:model-highlevel}
\end{figure*}

\vspace{3pt}
\noindent\textbf{The LST and DST Divide:\label{sec:gap}} As illustrated by the yellow region in \cref{fig:teaser-ideal}, there is a potential gap between LST and DST. This gap at $\lambda=0$ stems from the irreducible error from the prediction $\mathbb{E}[Y|X]$ of a Bayes Classifier, or when $Y$ is fully recoverable from $X$. And, when $\lambda > 0$, the gap between LST and DST widens in two scenarios: 1) $Y\not\indep S$: The model starts discarding $S$ from the representation $Z$, which will lead to $Y$ being even less recoverable from $X$ compared to $\lambda=0$. and 2) $Y\indep S$: If $X$ entangles $Y$ and $S$ in such a way that $Y$ is not recoverable from $X$ when $S$ is discarded, it will lead to $Y$ being even less recoverable from $X$ compared to $\lambda=0$.

\section{Numerically Quantifying the Trade-Offs \label{sec:approach}}
Now, we turn to the second goal of this paper, numerically quantifying the trade-offs from data. \cref{fig:model-highlevel} shows a high-level overview of \methodName{} to learn a fair representation for a given trade-off parameter $\lambda$. \methodName{} comprises a feature extractor and a fair encoder. It receives raw data as input and uses a feature extractor to provide features for the fair encoder. The encoder uses the extracted features and employs a closed-form solver to find the optimum function that maps these features to a new feature space that minimizes the dependency on the sensitive attribute while maximizing the dependency on the target attribute. Following this, to predict the target $Y$, a classifier is trained with the standard cross-entropy loss for classification problems. This process is repeated for multiple values of $\lambda$ with $0 \leq \lambda < 1$ to obtain the full trade-off curves.

\subsection{Problem Setup}
We start from \cref{def:dst} and model the function $\bm{f}$ as a composition $\bm{f}_{FE} \circ \bm{f}_{Enc}$ of the feature extractor and a fair encoder i.e., $\bm{f}(X;\bm{\Theta})=f_{Enc}(f_{FE}(X;\bm{\Theta}_{FE});\bm{\Theta}_{Enc})$. We parameterize $\bm{f}$ with $\bm{\Theta} = [\bm{\Theta}_{FE}; \bm{\Theta}_{Enc}]$ where $\bm{\Theta}_{FE}$ are the parameters of $\bm{f}_{FE}$ and $\bm{\Theta}_{Enc}$ are the parameters of $\bm{f}_{Enc}$. The objective function in \cref{def:dst} is now
\begin{eqnarray}\label{eq:main-formulation}
\min_{\bm \Theta} \Big\{(1-\lambda)\inf_{\bm \Theta_Y }\mathbb E_{X,Y}\left[  L_Y\left (g_Y\left(\bm f(X; \bm \Theta); \bm \Theta_Y\right), Y \right)\right]\nn\\
+ \lambda\, \mathrm{Dep}\left(\bm f\left(X_c; \bm \Theta\right), S_c\right) \Big\}, \quad 0\le\lambda<1.
\end{eqnarray}
where $\mathrm{Dep}\left(\bm f\left(X_c; \bm \Theta\right), S_c\right)$ is equivalent to the term $\mathrm{Dep}(\bm{f}(X), S|Y=y)$ in \cref{def:dst} when $Y$ is not a continuous label. In this case, $X_c\sim P(X|Y=y)$ and $S_c\sim P(S|Y=y)$ are the random variables that represent the data and sensitive attribute conditioned on $Y = y$, respectively. The fair representation is $Z=\bm{f}(X;\bm{\Theta})$.

\subsection{Optimization via Dependence Measures}
The formulation in \eqref{eq:main-formulation} can be directly optimized for an appropriate choice of dependence measure. Different choices of $\mathrm{Dep}$ lead to different fair representation learning methods. For instance, measuring $\mathrm{Dep}$ through an adversary leads to the class of adversarial representation learning (ARL) methods~\cite{xie2017controllable, roy2019mitigating, sadeghi2019global, sadeghi2021adversarial}. Similarly, employing the Hilbert Schmidt Independence Criterion (HSIC)~\cite{gretton2005kernel} as $\mathrm{Dep}$ leads to FairHSIC~\cite{quadrianto2019discovering}. However, due to challenges in optimization~\cite{roy2019mitigating, sadeghi2019global, sadeghi2021adversarial} and as we demonstrate in \S\ref{sec:experiments:results-frl}, these approaches are either very unstable, fail to span the trade-off or lead to sub-optimal trade-offs.

Recently, Sadeghi~\etal~\cite{sadeghi2022on} demonstrated that adopting an HSIC-like dependence measure for the fairness objective and the target loss leads to a closed-form solution that is both efficient and effective at finding a near-optimal trade-off. Therefore, we incorporate the HSIC-like dependence measure and the closed-form solver into \methodName{}. Thus \eqref{eq:main-formulation} can be expressed as,
\begin{eqnarray}
\sup_{\bm f \in \mathcal A_r} \Big\{&(1-\lambda)\,\mathrm{Dep}\left(\bm f({X};\bm{\Theta}), Y\right) \nn \\
&-\lambda\, \mathrm{Dep}\left(\bm f({X_c};\bm{\Theta}), S_c\right)\Big\},
\label{eq:main-kernel}
\end{eqnarray}
\noindent where $\mathcal A_r$ is a function space that encourages the representations to be uncorrelated. It does not affect the optimality of the learned encoder~\cite{sadeghi2022on} and improves the compactness of representation~\cite{bengio2013representation}. Note that while the first term involves all data $X$, the second involves the conditional data $X_c$.

\vspace{3pt}
\noindent\textbf{Choice of Dependence Measure:} We adapt the dependence measure from \cite{sadeghi2022on} since it lends itself to a closed-form solution while capturing linear and non-linear dependencies under mild assumptions. While the dependence measure in \cite{sadeghi2022on} has been defined for absolute independence, our formulation in \eqref{eq:main-kernel} also requires conditional independence to be compatible with \emph{separation} based fairness definitions. Therefore, when $Y$ is not a continuous label, we define the conditional dependence measure as,
\begin{equation}
\begin{aligned}\label{eq:dep-pop}
& \text{Dep}(\bm f(X), S|Y=y) := \\
& \sum_{j=1}^r \sum_{\beta_S \in \mathcal U_S } \mathbb{E}\left[\left(f_j(X_c)-\mathbb{E}f_j(X_c)\right)\left(\beta_S(S_c)-\mathbb{E}\beta_S(S_c)\right)\right]
\end{aligned}
\end{equation}
\noindent where $\mathcal{U}_S$ is a countable orthonormal basis set for the separable universal RKHS $\mathcal{H}_S$ and $X_c\sim P(X|Y=y)$ and $S_c\sim P(S|Y=y)$ are data and sensitive attributes, respectively. \emph{Empirically} it can be estimated as,
\begin{equation}\label{eq:dep-emp}
\text{Dep}(\bm f(X), S|Y=y):=\frac{1}{n^2}\left\|\bm \Theta \bm K_{X_c} \bm H \bm L_{S_c} \right\|^2_F,
\end{equation}
where $n$ is the number of data samples, $\bm K_{X_c} \in \mathbb{R}^{n\times n}$ is the Gram matrix corresponding to $\mathcal{H}_X$, $\bm \Theta$ is the encoder parameter in $\bm f(X) = \bm \Theta [ k_{X_1}, k_{X_2}, \cdots, k_{X_n}]^T$, $\bm H = \bm I_n-\frac{1}{n} \bm 1_n \bm 1_n^T$ is the centering matrix, and $\bm L_{S_c}$ is a full column-rank matrix such that $\bm L_{S_c} \bm L_{S_c}^T=\bm K_{S_c}$ (Cholesky factorization).

\subsection{A Solution to the Optimization Problem}
\noindent\textbf{Closed-Form Solver via Functions in RKHSs:} Directly solving for all the parameters $\bm{\Theta}$ through \eqref{eq:main-kernel} and \eqref{eq:dep-emp} leads to abysmal performance in practice since the kernel $\bm{K}_X$ has to be computed over the raw data space. Therefore, we instead define the fair encoder on the co-domain of the feature extractor $\bm{f}(\cdot;\bm{\Theta}_{FE})$. So, in this case, \eqref{eq:main-kernel} reduces to,
\begin{eqnarray}
\sup_{\bm f_{Enc} \in \mathcal A_r} \Big\{&(1-\lambda)\,\mathrm{Dep}\left(\bm f_{Enc}(\tilde{X};\bm{\Theta}_{Enc}), Y\right) \nn\\
 &-\lambda\, \mathrm{Dep}\left(\bm f_{Enc}(\tilde{X}_c;\bm{\Theta}_{Enc}), S_c\right)\Big\},
\label{eq:main-kernel-new}
\end{eqnarray}
where $\tilde{X}=f(X;\bm{\Theta_{FE}})$, and the first and second terms are $\frac{1}{n^2}\left\|\bm \Theta_{Enc} \bm K_{\tilde{X}} \bm H \bm L_{Y} \right\|^2_F$ and $\frac{1}{n^2}\left\|\bm \Theta_{Enc} \bm K_{\tilde{X}_c} \bm H \bm L_{S_c} \right\|^2_F$, respectively. The parameters $\bm{\Theta}_{Enc}$ can now be solved exactly via a closed-form solution:
\begin{theorem}
\label{thm:main-emp}
A global optimizer of \eqref{eq:main-kernel-new} is 
\begin{eqnarray}
\bm f^{\text{opt}}_{\mathcal H_{\tilde{X}}}(\tilde{X}; \bm \Theta_{Enc}) =
\bm \Theta^{\text{opt}}_{Enc}
\left[k_{\tilde{X}}(\tilde{\bm{x}}_1, {\tilde{X}}),\cdots, k_{\tilde{X}}(\tilde{\bm{x}}_n, {\tilde{X}})\right]^T\nn
\end{eqnarray}
where $\bm \Theta^{\text{opt}}_{Enc}=\bm U^T \bm L_{\tilde{X}}^\dagger\in \mathbb R^{r\times n}$ and the columns of $\bm U$ are eigenvectors corresponding to the $r$ largest eigenvalues of the following generalized eigenvalue problem.
\begin{eqnarray}\label{eq:eig-emp}
\left((1-\lambda) \bm L^T_{\tilde{X}} \bm H\bm K_Y\bm H \bm L_{\tilde{X}}  -\lambda \bm L^T_{{\tilde{X}}_c} \bm H\bm K_{S_c}\bm H \bm L_{{\tilde{X}}_c} \right)\bm u \nn\\
= \lambda \left(\frac{1}{n}\,\bm L^T_{\tilde{X}} \bm H \bm L_{\tilde{X}} + \gamma \bm I\right) \bm u.
\end{eqnarray}
Here $\bm L_{\tilde{X}}\bm L_{\tilde{X}}^T=\bm K_{\tilde{X}}$, ${\tilde{X}}_c \sim p(\tilde{X}|Y=y)$ and $S_c \sim p(S|Y=y)$.
\end{theorem}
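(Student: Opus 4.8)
The plan is to turn this infinite-dimensional optimization over $\bm f_{Enc}\in\mathcal A_r\subset\mathcal H_{\tilde X}$ into a finite-dimensional generalized trace-maximization problem, adapting the closed-form argument of~\cite{sadeghi2022on} to the \emph{conditional} dependence measure of~\eqref{eq:dep-emp}. First I would establish a representer-type reduction: both empirical terms in~\eqref{eq:main-kernel-new}, as well as the (ridge-regularized) uncorrelatedness condition defining $\mathcal A_r$, depend on each coordinate function $f_j$ only through its values at the $n$ feature points $\tilde{\bm x}_1,\dots,\tilde{\bm x}_n$ — the conditional sample $\tilde X_c$ being exactly the sub-collection of these with $Y=y$, picked out by a $0/1$ selection matrix $\bm S_c$. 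Hence the component of $f_j$ orthogonal to $\mathrm{span}\{k_{\tilde X}(\tilde{\bm x}_i,\cdot)\}_{i=1}^{n}$ contributes to neither the objective nor feasibility and may be set to zero, so a maximizer exists of the asserted form $\bm f_{Enc}(\tilde X;\bm\Theta_{Enc})=\bm\Theta_{Enc}\,[k_{\tilde X}(\tilde{\bm x}_1,\tilde X),\dots,k_{\tilde X}(\tilde{\bm x}_n,\tilde X)]^{T}$ with $\bm\Theta_{Enc}\in\mathbb R^{r\times n}$.

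Next I would substitute this parametrization, rewrite every squared Frobenius norm through $\|\bm A\|_F^2=\Tr{\bm A\bm A^T}$, and use the factorizations $\bm K_{\tilde X}=\bm L_{\tilde X}\bm L_{\tilde X}^T$, $\bm K_Y=\bm L_Y\bm L_Y^T$, $\bm K_{S_c}=\bm L_{S_c}\bm L_{S_c}^T$ together with $\bm L_{\tilde X_c}:=\bm S_c\bm L_{\tilde X}$ (so that $\bm K_{\tilde X_c}=\bm L_{\tilde X_c}\bm L_{\tilde X_c}^T$ and the encoder evaluated on $\tilde X_c$ equals $\bm\Theta_{Enc}\bm L_{\tilde X}\bm L_{\tilde X_c}^T$). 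With the change of variables $\bm U:=\bm L_{\tilde X}^T\bm\Theta_{Enc}^T$ — whose minimum-norm pre-image is precisely the claimed $\bm\Theta_{Enc}^{\text{opt}}=\bm U^T\bm L_{\tilde X}^{\dagger}$ — the objective of~\eqref{eq:main-kernel-new} becomes $\tfrac1{n^2}\Tr{\bm U^T\bm M\bm U}$ with $\bm M=(1-\lambda)\,\bm L_{\tilde X}^T\bm H\bm K_Y\bm H\bm L_{\tilde X}-\lambda\,\bm L_{\tilde X_c}^T\bm H\bm K_{S_c}\bm H\bm L_{\tilde X_c}$, while the defining condition of $\mathcal A_r$ becomes $\bm U^T\bm B\bm U=\bm I_r$ with $\bm B=\tfrac1n\bm L_{\tilde X}^T\bm H\bm L_{\tilde X}+\gamma\bm I$; note $\bm B\succ0$ because $\gamma>0$, which also guarantees the reduced supremum is finite.

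Finally, I would invoke the extremal characterization of generalized Rayleigh quotients: maximizing $\Tr{\bm U^T\bm M\bm U}$ over the compact set $\{\bm U^T\bm B\bm U=\bm I_r\}$ with $\bm M$ symmetric and $\bm B\succ0$ is solved (Ky Fan / Courant--Fischer) by taking the columns of $\bm U$ to be generalized eigenvectors of the pencil $(\bm M,\bm B)$ for the $r$ largest generalized eigenvalues, i.e.\ solutions $\bm u$ of $\bm M\bm u=\mu\,\bm B\bm u$; rearranging this identity into the form dictated by the normalization of $\mathcal A_r$ gives exactly~\eqref{eq:eig-emp}, and pushing $\bm U$ back through $\bm\Theta_{Enc}=\bm U^T\bm L_{\tilde X}^{\dagger}$ recovers the stated closed-form encoder. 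Attainment of the original (infinite-dimensional) supremum then follows from the representer reduction, since the finite-dimensional problem attains its value on the compact constraint set.

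The step I expect to be the main obstacle is the bookkeeping around the conditioning: one must check that evaluating the RKHS encoder on the conditional sub-sample, then centering and weighting by $\bm L_{S_c}$, collapses cleanly into the single matrix $\bm L_{\tilde X_c}^T\bm H\bm K_{S_c}\bm H\bm L_{\tilde X_c}$ in which the \emph{same} variable $\bm U$ appears as in the utility term — which forces one to be careful about which centering matrix $\bm H$ acts on which index set and about the compatibility between $\bm L_{\tilde X_c}$ and $\bm L_{\tilde X}$. Two further, more routine, points need care: justifying the representer reduction and the minimum-norm choice $\bm\Theta_{Enc}=\bm U^T\bm L_{\tilde X}^{\dagger}$ when $\bm K_{\tilde X}$ is rank-deficient (so $\bm\Theta_{Enc}$ is identifiable only modulo the left null space of $\bm L_{\tilde X}$), and tracking the exact scalar factors (the roles of $\lambda$ and $\gamma$ on the right-hand side of~\eqref{eq:eig-emp}) through the normalization of $\mathcal A_r$. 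The concluding appeal to Ky Fan is then standard.
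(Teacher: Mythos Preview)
Your proposal is correct and follows essentially the same route as the paper: representer reduction to the finite parametrization $\bm\Theta_{Enc}$, the change of variables $\bm V=\bm L_{\tilde X}^T\bm\Theta_{Enc}^T$, reduction of both the objective and the $\mathcal A_r$ constraint to a trace maximization under $\bm V^T\bm C\bm V=\bm I_r$, and then the generalized-eigenvalue characterization (the paper cites~\cite{kokiopoulou2011trace} where you invoke Ky Fan), with the minimum-norm choice $\bm\Theta_{Enc}^{\text{opt}}=\bm U^T\bm L_{\tilde X}^{\dagger}$ recovered exactly as you describe. Your caution about the scalar on the right of~\eqref{eq:eig-emp} is apt --- in the appendix proofs the eigenvalue is denoted $\tau$, so the symbol there is a notational overload rather than the trade-off parameter $\lambda$.
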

\begin{proof}
The objective in~\eqref{eq:main-kernel-new} reduces to a generalized eigenvalue problem~\cite{kokiopoulou2011trace} by expressing it as a trace optimization problem. See supplementary for detailed proof.
\end{proof}
While this is a general solution to \eqref{eq:main-kernel-new}, the solution for each group fairness case is detailed in the supplementary.

\vspace{3pt}
\noindent\textbf{Alternating Optimization:} Now we present our full algorithm to optimize \eqref{eq:main-kernel}. We adopt standard minibatch to learn the feature encoder's parameters $\bm{\Theta}_{FE}$ and the closed-form solver for the fair encoder parameters $\bm{\Theta}_{Enc}$. We optimize them alternatively where in each iteration, we update $\bm{\Theta}_{Enc}$ while freezing $\bm{\Theta}_{FE}$ and vice-versa. Specifically, to optimize the fair encoder's parameters $\bm{\Theta}_{Enc}$, we extract features from the data using the frozen feature extractor and use the closed-form solution in \eqref{eq:main-kernel-new} to update $\bm{\Theta}_{Enc}$. Then, we update the feature extractor's parameters $\bm{\Theta}_{FE}$ through minibatch SGD in \eqref{eq:main-kernel}  while freezing the encoder parameters. We repeat this process for every minibatch iteration. More details and an illustration of this alternating algorithm can be found in the supplementary material.
\subsection{Numerically Estimating the LST\label{sec:L}}
The Label Space Trade-off (LST) arises when the representation $Z$ is not restricted to be a function of $X$. Following the discussion in the previous subsection, this trade-off can be formulated as,
\begin{eqnarray}\label{eq:trZ}
 \sup_{Z\in L_r^2} \Big\{(1-\lambda)\, \text{Dep}\big(Z, Y\big)-\lambda\,\text{Dep}\big( Z, S | Y = y \big)\Big\},
\end{eqnarray}
where $L_r^2$ is the space of all RVs of dimension $r$ with finite variance, i.e., $\mathbb E_Z \big[ \big\|Z-\mathbb E[Z]\big\|^2<\infty\big]$. From~\eqref{eq:trZ}, observe that the optimal $Z$ is a function of $\bm p_{Y, S}$ only. Therefore, instead of directly optimizing $Z$ over $L_r^2$, equivalently, we optimize for $\bm \Theta_{FE}$ and $\bm \Theta_{Enc}$ as
\begin{eqnarray}\label{eq:compos-inherit}
\max_{\bm \Theta_{FE}, \bm \Theta_{Enc}} \Big\{(1-\lambda)\, \text{Dep}\left(\bm f\left(Y, S; \bm \Theta_{FE}, \bm \Theta_{Enc}\right), Y\right)\nn\\
+\lambda\, \text{Dep}\left(\bm f\left(Y,S; \bm \Theta_{FE}, \bm \Theta_{Enc}\right),  S | Y=y \right) \Big\}.
\end{eqnarray}
Here $\bm{f}$ is a function of the labels $Y$ and $S$, i.e., the model takes as input $Y$ and $S$ and seeks to remove the information corresponding to $S$, including that present in $Y$. In practice, to improve the stability of the optimization and facilitate learning, in addition to $Y$ and $S$, we also use $X$.
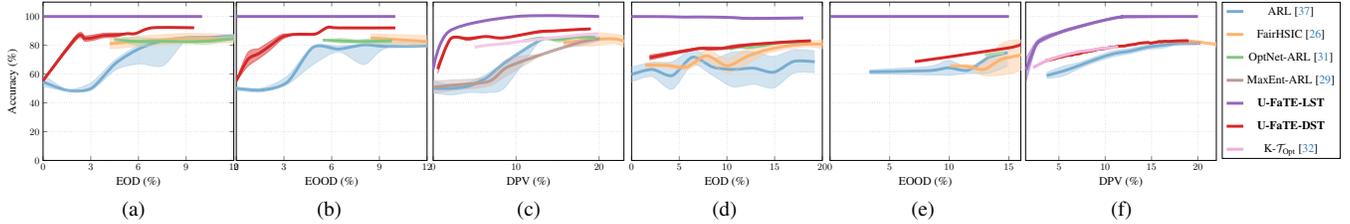
\begin{figure*}[!t]
    \def\subf{0.14}
    \def\scb {0.37}
    \centering
    \begin{adjustwidth*}{0em}{0em}
    \captionsetup[subfigure]{oneside,margin={3em,0em}}
    \begin{subfigure}[c]{\subf\linewidth}
        \centering
        \scalebox{\scb}{
        \begin{tikzpicture}[scale=1.0]
\begin{axis}
        [ xlabel=\large{EOD (\%)}, 
        ylabel=\large{Accuracy (\%)},  
        xmin=-0.0, xmax=12, ymin=0.0, ymax=110, grid=major, grid style=dotted, legend style={at={(.99,0.01)},anchor=south east},
        xtick distance=3, ylabel near ticks,
        yticklabel style={/pgf/number format/fixed}, xticklabel style={/pgf/number format/fixed}]

        \definecolor{color1}{HTML}{1f77b4} %
        \definecolor{color2}{HTML}{ff7f0e} %
        \definecolor{color3}{HTML}{2ca02c} %
        \definecolor{color4}{HTML}{d62728} %
        \definecolor{color5}{HTML}{9467bd} %
        \definecolor{color6}{HTML}{8c564b} %
        \definecolor{color7}{HTML}{e377c2} %
        \definecolor{color8}{HTML}{7f7f7f} %
        \definecolor{color9}{HTML}{bcbd22} %
        \definecolor{color10}{HTML}{17becf} %

        \pgfplotstableread{plots/txt/celeba-eo-line/ARL/val_ctl_EO_m_var_EO___val_tgt_utility.txt}{\data}
        
        \addplot[color1!60, line width=3pt, smooth] table[x expr={\thisrow{x}*100}, y expr={\thisrow{mean}*100}] {\data};
        \addlegendentry{ARL\cite{NIPS2017_8cb22bdd}}
        
        \addplot[color1!40, name path=c1, opacity=0.5, smooth, forget plot] table[x expr={\thisrow{x}*100}, y expr=100*\thisrow{mean}+100*\thisrow{std_u}] {\data};
        \addplot[color1!40, name path=c2, opacity=0.5, smooth, forget plot] table[x expr={\thisrow{x}*100}, y expr=100*\thisrow{mean}-100*\thisrow{std_l}] {\data};
        \addplot[color1!40, opacity=0.5, forget plot] fill between[of=c1 and c2];

        \pgfplotstableread{plots/txt/celeba-eo-line/HSIC/val_ctl_EO_m_var_EO___val_tgt_utility.txt}{\data}
        
        \addplot[color2!60, line width=3pt, smooth] table[x expr={\thisrow{x}*100}, y expr={\thisrow{mean}*100}] {\data};
        \addlegendentry{HSIC-IRepL\cite{Quadrianto_2019_CVPR}}
        
        \addplot[color2!40, name path=c3, opacity=0.5, smooth, forget plot] table[x expr={\thisrow{x}*100}, y expr=100*\thisrow{mean}+100*\thisrow{std}] {\data};
        \addplot[color2!40, name path=c4, opacity=0.5, smooth, forget plot] table[x expr={\thisrow{x}*100}, y expr=100*\thisrow{mean}-100*\thisrow{std}] {\data};
        \addplot[color2!40, opacity=0.5, forget plot] fill between[of=c3 and c4];

        \pgfplotstableread{plots/txt/celeba-eo-line/OptNet/val_ctl_EO_m_var_EO___val_tgt_utility.txt}{\data}
        
        \addplot[color3!60, line width=3pt, smooth] table[x expr={\thisrow{x}*100}, y expr={\thisrow{mean}*100}] {\data};
        \addlegendentry{OptNet\cite{10.1007/978-3-030-86520-7_45}}
        
        \addplot[color3!40, name path=c5, opacity=0.5, smooth, forget plot] table[x expr={\thisrow{x}*100}, y expr=100*\thisrow{mean}+100*\thisrow{std}] {\data};
        \addplot[color3!40, name path=c6, opacity=0.5, smooth, forget plot] table[x expr={\thisrow{x}*100}, y expr=100*\thisrow{mean}-100*\thisrow{std}] {\data};
        \addplot[color3!40, opacity=0.5, forget plot] fill between[of=c5 and c6];

        \pgfplotstableread{plots/txt/celeba-eo-line/LST-XYS/val_ctl_EO_m_var_EO___val_tgt_utility.txt}{\data}
        
        \addplot[color5, line width=3pt, smooth] table[x expr={\thisrow{x}*100}, y expr={\thisrow{mean}*100}] {\data};
        \addlegendentry{\methodName-LST}
        
        \addplot[color5, name path=c5, opacity=0.5, smooth, forget plot] table[x expr={\thisrow{x}*100}, y expr=100*\thisrow{mean}+100*\thisrow{std}] {\data};
        \addplot[color5, name path=c6, opacity=0.5, smooth, forget plot] table[x expr={\thisrow{x}*100}, y expr=100*\thisrow{mean}-100*\thisrow{std}] {\data};
        \addplot[color5, opacity=0.5, forget plot] fill between[of=c5 and c6];
        
        \pgfplotstableread{plots/txt/celeba-eo-line/DST/val_ctl_EO_m_var_EO___val_tgt_utility.txt}{\data}
        
        \addplot[color4, line width=3pt, smooth] table[x expr={\thisrow{x}*100}, y expr={\thisrow{mean}*100}] {\data};
        \addlegendentry{\methodName-DST}
        
        \addplot[color4, name path=c7, opacity=0.5, smooth, forget plot] table[x expr={\thisrow{x}*100}, y expr=100*\thisrow{mean}+100*\thisrow{std}] {\data};
        \addplot[color4, name path=c8, opacity=0.5, smooth, forget plot] table[x expr={\thisrow{x}*100}, y expr=100*\thisrow{mean}-100*\thisrow{std}] {\data};
        \addplot[color4, opacity=0.5, forget plot] fill between[of=c7 and c8];

\legend{}
\end{axis}

\end{tikzpicture}
        }
        \caption{}
        \label{fig:results:celeba:eo}
    \end{subfigure}
    \hspace{1.1em}
    \captionsetup[subfigure]{oneside,margin={0.7em,0em}}
    \begin{subfigure}[c]{\subf\linewidth}
        \centering
        \scalebox{\scb}{
        \begin{tikzpicture}[scale=1.0]
\begin{axis}
        [ xlabel=\large{EOOD (\%)}, 
        ymajorticks=false,
        xmin=-0.0, xmax=12, ymin=0.0, ymax=110, grid=major, grid style=dotted, legend style={at={(.99,0.01)},anchor=south east},
        xtick distance=3, ylabel near ticks,
        yticklabel style={/pgf/number format/fixed}, xticklabel style={/pgf/number format/fixed}]

        \definecolor{color1}{HTML}{1f77b4} %
        \definecolor{color2}{HTML}{ff7f0e} %
        \definecolor{color3}{HTML}{2ca02c} %
        \definecolor{color4}{HTML}{d62728} %
        \definecolor{color5}{HTML}{9467bd} %
        \definecolor{color6}{HTML}{8c564b} %
        \definecolor{color7}{HTML}{e377c2} %
        \definecolor{color8}{HTML}{7f7f7f} %
        \definecolor{color9}{HTML}{bcbd22} %
        \definecolor{color10}{HTML}{17becf} %

        \pgfplotstableread{plots/txt/celeba-eoo-line/ARL/val_ctl_EOO_m_var_EOO___val_tgt_utility.txt}{\data}
        
        \addplot[color1!60, line width=3pt, smooth] table[x expr={\thisrow{x}*100}, y expr={\thisrow{mean}*100}] {\data};
        \addlegendentry{ARL\cite{NIPS2017_8cb22bdd}}
        
        \addplot[color1!40, name path=c1, opacity=0.5, smooth, forget plot] table[x expr={\thisrow{x}*100}, y expr=100*\thisrow{mean}+100*\thisrow{std_u}] {\data};
        \addplot[color1!40, name path=c2, opacity=0.5, smooth, forget plot] table[x expr={\thisrow{x}*100}, y expr=100*\thisrow{mean}-100*\thisrow{std_l}] {\data};
        \addplot[color1!40, opacity=0.5, forget plot] fill between[of=c1 and c2];

        \pgfplotstableread{plots/txt/celeba-eoo-line/HSIC/val_ctl_EOO_m_var_EOO___val_tgt_utility.txt}{\data}
        
        \addplot[color2!60, line width=3pt, smooth] table[x expr={\thisrow{x}*100}, y expr={\thisrow{mean}*100}] {\data};
        \addlegendentry{HSIC-FRepL\cite{Quadrianto_2019_CVPR}}
        
        \addplot[color2!40, name path=c3, opacity=0.5, smooth, forget plot] table[x expr={\thisrow{x}*100}, y expr=100*\thisrow{mean}+100*\thisrow{std}] {\data};
        \addplot[color2!40, name path=c4, opacity=0.5, smooth, forget plot] table[x expr={\thisrow{x}*100}, y expr=100*\thisrow{mean}-100*\thisrow{std}] {\data};
        \addplot[color2!40, opacity=0.5, forget plot] fill between[of=c3 and c4];

        \pgfplotstableread{plots/txt/celeba-eoo-line/OptNet/val_ctl_EOO_m_var_EOO___val_tgt_utility.txt}{\data}
        
        \addplot[color3!60, line width=3pt, smooth] table[x expr={\thisrow{x}*100}, y expr={\thisrow{mean}*100}] {\data};
        \addlegendentry{OptNet\cite{10.1007/978-3-030-86520-7_45}}
        
        \addplot[color3!40, name path=c5, opacity=0.5, smooth, forget plot] table[x expr={\thisrow{x}*100}, y expr=100*\thisrow{mean}+100*\thisrow{std}] {\data};
        \addplot[color3!40, name path=c6, opacity=0.5, smooth, forget plot] table[x expr={\thisrow{x}*100}, y expr=100*\thisrow{mean}-100*\thisrow{std}] {\data};
        \addplot[color3!40, opacity=0.5, forget plot] fill between[of=c5 and c6];

        \pgfplotstableread{plots/txt/celeba-eoo-line/Kernel-XYS/val_ctl_EOO_m_var_EOO___val_tgt_utility.txt}{\data}
        
        \addplot[color5, line width=3pt, smooth] table[x expr={\thisrow{x}*100}, y expr={\thisrow{mean}*100}] {\data};
        \addlegendentry{\methodName-LST}
        
        \addplot[color5, name path=c5, opacity=0.5, smooth, forget plot] table[x expr={\thisrow{x}*100}, y expr=100*\thisrow{mean}+100*\thisrow{std}] {\data};
        \addplot[color5, name path=c6, opacity=0.5, smooth, forget plot] table[x expr={\thisrow{x}*100}, y expr=100*\thisrow{mean}-100*\thisrow{std}] {\data};
        \addplot[color5, opacity=0.5, forget plot] fill between[of=c5 and c6];
        
        \pgfplotstableread{plots/txt/celeba-eoo-line/DST/val_ctl_EOO_m_var_EOO___val_tgt_utility.txt}{\data}
        
        \addplot[color4, line width=3pt, smooth] table[x expr={\thisrow{x}*100}, y expr={\thisrow{mean}*100}] {\data};
        \addlegendentry{\methodName-DST}
        
        \addplot[color4, name path=c7, opacity=0.5, smooth, forget plot] table[x expr={\thisrow{x}*100}, y expr=100*\thisrow{mean}+100*\thisrow{std}] {\data};
        \addplot[color4, name path=c8, opacity=0.5, smooth, forget plot] table[x expr={\thisrow{x}*100}, y expr=100*\thisrow{mean}-100*\thisrow{std}] {\data};
        \addplot[color4, opacity=0.5, forget plot] fill between[of=c7 and c8];

    \legend{}
\end{axis}
\end{tikzpicture}
        }
        \caption{}
        \label{fig:results:celeba:eoo}
    \end{subfigure}
    \hspace{-0.01em}
    \captionsetup[subfigure]{oneside,margin={0.9em,0em}}
    \begin{subfigure}[c]{\subf\linewidth}
        \centering
        \scalebox{\scb}{
        \begin{tikzpicture}[scale=1.0]
\begin{axis}
        [ xlabel=\large{DPV (\%)}, 
        ymajorticks=false,
        xmin=-0.0, xmax=23.0, ymin=0.0, ymax=110, grid=major, grid style=dotted,  legend style={at={(1.05,1)},anchor=north west},
        xtick distance=10, ylabel near ticks,
        yticklabel style={/pgf/number format/fixed}, xticklabel style={/pgf/number format/fixed}]

        \definecolor{color1}{HTML}{1f77b4} %
        \definecolor{color2}{HTML}{ff7f0e} %
        \definecolor{color3}{HTML}{2ca02c} %
        \definecolor{color4}{HTML}{d62728} %
        \definecolor{color5}{HTML}{9467bd} %
        \definecolor{color6}{HTML}{8c564b} %
        \definecolor{color7}{HTML}{e377c2} %
        \definecolor{color8}{HTML}{7f7f7f} %
        \definecolor{color9}{HTML}{bcbd22} %
        \definecolor{color10}{HTML}{17becf} %

        \pgfplotstableread{plots/txt/celeba-dpv-line/ARL/val_ctl_SP_m_var_SP___val_tgt_utility.txt}{\data}
        
        \addplot[color1!60, line width=3pt, smooth] table[x expr={\thisrow{x}*100}, y expr={\thisrow{mean}*100}] {\data};
        \addlegendentry{ARL\cite{NIPS2017_8cb22bdd}}
        
        \addplot[color1!40, name path=c1, opacity=0.5, smooth, forget plot] table[x expr={\thisrow{x}*100}, y expr=100*\thisrow{mean}+100*\thisrow{std_u}] {\data};
        \addplot[color1!40, name path=c2, opacity=0.5, smooth, forget plot] table[x expr={\thisrow{x}*100}, y expr=100*\thisrow{mean}-100*\thisrow{std_l}] {\data};
        \addplot[color1!40, opacity=0.5, forget plot] fill between[of=c1 and c2];

        \pgfplotstableread{plots/txt/celeba-dpv-line/HSIC/val_ctl_SP_m_var_SP___val_tgt_utility.txt}{\data}
        
        \addplot[color2!60, line width=3pt, smooth] table[x expr={\thisrow{x}*100}, y expr={\thisrow{mean}*100}] {\data};
        \addlegendentry{HSIC-FRepL\cite{Quadrianto_2019_CVPR}}
        
        \addplot[color2!40, name path=c3, opacity=0.5, smooth, forget plot] table[x expr={\thisrow{x}*100}, y expr=100*\thisrow{mean}+100*\thisrow{std}] {\data};
        \addplot[color2!40, name path=c4, opacity=0.5, smooth, forget plot] table[x expr={\thisrow{x}*100}, y expr=100*\thisrow{mean}-100*\thisrow{std}] {\data};
        \addplot[color2!40, opacity=0.5, forget plot] fill between[of=c3 and c4];

        \pgfplotstableread{plots/txt/celeba-dpv-line/MaxEnt/val_ctl_SP_m_var_SP___val_tgt_utility.txt}{\data}
        
        \addplot[color6!60, line width=3pt, smooth] table[x expr={\thisrow{x}*100}, y expr={\thisrow{mean}*100}] {\data};
        \addlegendentry{MaxEnt-ARL\cite{roy2019mitigating}}
        
        \addplot[color6!40, name path=c5, opacity=0.5, smooth, forget plot] table[x expr={\thisrow{x}*100}, y expr=100*\thisrow{mean}+100*\thisrow{std}] {\data};
        \addplot[color6!40, name path=c6, opacity=0.5, smooth, forget plot] table[x expr={\thisrow{x}*100}, y expr=100*\thisrow{mean}-100*\thisrow{std}] {\data};
        \addplot[color6!40, opacity=0.5, forget plot] fill between[of=c5 and c6];

        \pgfplotstableread{plots/txt/celeba-dpv-line/OptNet/val_ctl_SP_m_var_SP___val_tgt_utility.txt}{\data}
        
        \addplot[color3!60, line width=3pt, smooth] table[x expr={\thisrow{x}*100}, y expr={\thisrow{mean}*100}] {\data};
        \addlegendentry{OptNet\cite{10.1007/978-3-030-86520-7_45}}
        
        \addplot[color3!40, name path=c5, opacity=0.5, smooth, forget plot] table[x expr={\thisrow{x}*100}, y expr=100*\thisrow{mean}+100*\thisrow{std}] {\data};
        \addplot[color3!40, name path=c6, opacity=0.5, smooth, forget plot] table[x expr={\thisrow{x}*100}, y expr=100*\thisrow{mean}-100*\thisrow{std}] {\data};
        \addplot[color3!40, opacity=0.5, forget plot] fill between[of=c5 and c6];

        \pgfplotstableread{plots/txt/celeba-dpv-line/LST/val_ctl_SP_m_var_SP___val_tgt_utility.txt}{\data}
        
        \addplot[color5, line width=3pt, smooth] table[x expr={\thisrow{x}*100}, y expr={\thisrow{mean}*100}] {\data};
        \addlegendentry{\textbf{\methodName-LST}}
        
        \addplot[color5, name path=c5, opacity=0.5, smooth, forget plot] table[x expr={\thisrow{x}*100}, y expr=100*\thisrow{mean}+100*\thisrow{std}] {\data};
        \addplot[color5, name path=c6, opacity=0.5, smooth, forget plot] table[x expr={\thisrow{x}*100}, y expr=100*\thisrow{mean}-100*\thisrow{std}] {\data};
        \addplot[color5, opacity=0.5, forget plot] fill between[of=c5 and c6];
        
        \pgfplotstableread{plots/txt/celeba-dpv-line/DST/val_ctl_SP_m_var_SP___val_tgt_utility.txt}{\data}
        
        \addplot[color4, line width=3pt, smooth] table[x expr={\thisrow{x}*100}, y expr={\thisrow{mean}*100}] {\data};
        \addlegendentry{\textbf{\methodName-DST}}
        
        \addplot[color4, name path=c7, opacity=0.5, smooth, forget plot] table[x expr={\thisrow{x}*100}, y expr=100*\thisrow{mean}+100*\thisrow{std}] {\data};
        \addplot[color4, name path=c8, opacity=0.5, smooth, forget plot] table[x expr={\thisrow{x}*100}, y expr=100*\thisrow{mean}-100*\thisrow{std}] {\data};
        \addplot[color4, opacity=0.5, forget plot] fill between[of=c7 and c8];
        
        \pgfplotstableread{plots/txt/celeba-dpv-line/TMLR/val_ctl_SP_m_var_SP___val_tgt_utility.txt}{\data}
        
        \addplot[color7!40, line width=3pt, smooth] table[x expr={\thisrow{x}*100}, y expr={\thisrow{mean}*100}] {\data};
        \addlegendentry{K-$\mathcal T_{\text{Opt}}$\cite{sadeghi2022on}}
        
        \addplot[color7!40, name path=c5, opacity=0.5, smooth, forget plot] table[x expr={\thisrow{x}*100}, y expr=100*\thisrow{mean}+100*\thisrow{std}] {\data};
        \addplot[color7!40, name path=c6, opacity=0.5, smooth, forget plot] table[x expr={\thisrow{x}*100}, y expr=100*\thisrow{mean}-100*\thisrow{std}] {\data};
        \addplot[color7!40, opacity=0.5, forget plot] fill between[of=c5 and c6];

    \legend{}
\end{axis}

\end{tikzpicture}
        }
        \caption{}
        \label{fig:results:celeba:dpv}
    \end{subfigure}
    \hspace{0.05em}
    \captionsetup[subfigure]{oneside,margin={0.7em,0em}}
    \begin{subfigure}[c]{\subf\linewidth}
        \centering
        \scalebox{\scb}{
        \begin{tikzpicture}[scale=1.0]
\begin{axis}
        [ xlabel=\large{EOD (\%)}, 
        ymajorticks=false, 
        xmin=-0.0, xmax=20, ymin=0.0, ymax=110, grid=major, grid style=dotted, legend style={at={(.99,0.01)},anchor=south east},
        xtick distance=5, ylabel near ticks,
        yticklabel style={/pgf/number format/fixed}, xticklabel style={/pgf/number format/fixed}]

        \definecolor{color1}{HTML}{1f77b4} %
        \definecolor{color2}{HTML}{ff7f0e} %
        \definecolor{color3}{HTML}{2ca02c} %
        \definecolor{color4}{HTML}{d62728} %
        \definecolor{color5}{HTML}{9467bd} %
        \definecolor{color6}{HTML}{8c564b} %
        \definecolor{color7}{HTML}{e377c2} %
        \definecolor{color8}{HTML}{7f7f7f} %
        \definecolor{color9}{HTML}{bcbd22} %
        \definecolor{color10}{HTML}{17becf} %

        \pgfplotstableread{plots/txt/folk-eo-line/ARL/val_ctl_EO_m_var_EO___val_tgt_utility.txt}{\data}
        
        \addplot[color1!60, line width=3pt, smooth] table[x expr={\thisrow{x}*100}, y expr={\thisrow{mean}*100}] {\data};
        \addlegendentry{ARL\cite{NIPS2017_8cb22bdd}}
        
        \addplot[color1!40, name path=c1, opacity=0.5, smooth, forget plot] table[x expr={\thisrow{x}*100}, y expr=100*\thisrow{mean}+100*\thisrow{std}] {\data};
        \addplot[color1!40, name path=c2, opacity=0.5, smooth, forget plot] table[x expr={\thisrow{x}*100}, y expr=100*\thisrow{mean}-100*\thisrow{std}] {\data};
        \addplot[color1!40, opacity=0.5, forget plot] fill between[of=c1 and c2];

        \pgfplotstableread{plots/txt/folk-eo-line/HSIC/val_ctl_EO_m_var_EO___val_tgt_utility.txt}{\data}
        
        \addplot[color2!60, line width=3pt, smooth] table[x expr={\thisrow{x}*100}, y expr={\thisrow{mean}*100}] {\data};
        \addlegendentry{HSIC-FRepL\cite{Quadrianto_2019_CVPR}}
        
        \addplot[color2!40, name path=c3, opacity=0.5, smooth, forget plot] table[x expr={\thisrow{x}*100}, y expr=100*\thisrow{mean}+100*\thisrow{std}] {\data};
        \addplot[color2!40, name path=c4, opacity=0.5, smooth, forget plot] table[x expr={\thisrow{x}*100}, y expr=100*\thisrow{mean}-100*\thisrow{std}] {\data};
        \addplot[color2!40, opacity=0.5, forget plot] fill between[of=c3 and c4];

        \pgfplotstableread{plots/txt/folk-eo-line/OptNet/val_ctl_EO_m_var_EO___val_tgt_utility.txt}{\data}
        
        \addplot[color3!60, line width=3pt, smooth] table[x expr={\thisrow{x}*100}, y expr={\thisrow{mean}*100}] {\data};
        \addlegendentry{OptNet\cite{10.1007/978-3-030-86520-7_45}}
        
        \addplot[color3!40, name path=c5, opacity=0.5, smooth, forget plot] table[x expr={\thisrow{x}*100}, y expr=100*\thisrow{mean}+100*\thisrow{std}] {\data};
        \addplot[color3!40, name path=c6, opacity=0.5, smooth, forget plot] table[x expr={\thisrow{x}*100}, y expr=100*\thisrow{mean}-100*\thisrow{std}] {\data};
        \addplot[color3!40, opacity=0.5, forget plot] fill between[of=c5 and c6];

        \pgfplotstableread{plots/txt/folk-eo-line/Kernel-XYS/val_ctl_EO_m_var_EO___val_tgt_utility.txt}{\data}
        
        \addplot[color5, line width=3pt, smooth] table[x expr={\thisrow{x}*100}, y expr={\thisrow{mean}*100}] {\data};
        \addlegendentry{\methodName-LST}
        
        \addplot[color5, name path=c5, opacity=0.5, smooth, forget plot] table[x expr={\thisrow{x}*100}, y expr=100*\thisrow{mean}+100*\thisrow{std}] {\data};
        \addplot[color5, name path=c6, opacity=0.5, smooth, forget plot] table[x expr={\thisrow{x}*100}, y expr=100*\thisrow{mean}-100*\thisrow{std}] {\data};
        \addplot[color5, opacity=0.5, forget plot] fill between[of=c5 and c6];
        
        \pgfplotstableread{plots/txt/folk-eo-line/DST/val_ctl_EO_m_var_EO___val_tgt_utility.txt}{\data}
        
        \addplot[color4, line width=3pt, smooth] table[x expr={\thisrow{x}*100}, y expr={\thisrow{mean}*100}] {\data};
        \addlegendentry{\methodName-DST}
        
        \addplot[color4, name path=c7, opacity=0.5, smooth, forget plot] table[x expr={\thisrow{x}*100}, y expr=100*\thisrow{mean}+100*\thisrow{std}] {\data};
        \addplot[color4, name path=c8, opacity=0.5, smooth, forget plot] table[x expr={\thisrow{x}*100}, y expr=100*\thisrow{mean}-100*\thisrow{std}] {\data};
        \addplot[color4, opacity=0.5, forget plot] fill between[of=c7 and c8];
    \legend{}
\end{axis}
\end{tikzpicture}
        }
        \caption{}
        \label{fig:results:folktable:eo}
    \end{subfigure}
    \hspace{0.05em}
    \captionsetup[subfigure]{oneside,margin={0.7em,0em}}
    \begin{subfigure}[c]{\subf\linewidth}
        \centering
        \scalebox{\scb}{
        \begin{tikzpicture}[scale=1.0]
\begin{axis}
        [ xlabel=\large{EOOD (\%)}, 
        ymajorticks=false, 
        xmin=-0.0, xmax=16.0, ymin=0.0, ymax=110, grid=major, grid style=dotted, legend style={at={(.99,0.01)},anchor=south east},
        xtick distance=5, ylabel near ticks,
        yticklabel style={/pgf/number format/fixed}, xticklabel style={/pgf/number format/fixed}]

        \definecolor{color1}{HTML}{1f77b4} %
        \definecolor{color2}{HTML}{ff7f0e} %
        \definecolor{color3}{HTML}{2ca02c} %
        \definecolor{color4}{HTML}{d62728} %
        \definecolor{color5}{HTML}{9467bd} %
        \definecolor{color6}{HTML}{8c564b} %
        \definecolor{color7}{HTML}{e377c2} %
        \definecolor{color8}{HTML}{7f7f7f} %
        \definecolor{color9}{HTML}{bcbd22} %
        \definecolor{color10}{HTML}{17becf} %

        \pgfplotstableread{plots/txt/folk-eoo-line/ARL/val_ctl_EOO_m_var_EOO___val_tgt_utility.txt}{\data}
        
        \addplot[color1!60, line width=3pt, smooth] table[x expr={\thisrow{x}*100}, y expr={\thisrow{mean}*100}] {\data};
        \addlegendentry{ARL~\cite{xie2017controllable}}
        
        \addplot[color1!40, name path=c1, opacity=0.5, smooth, forget plot] table[x expr={\thisrow{x}*100}, y expr=100*\thisrow{mean}+100*\thisrow{std_u}] {\data};
        \addplot[color1!40, name path=c2, opacity=0.5, smooth, forget plot] table[x expr={\thisrow{x}*100}, y expr=100*\thisrow{mean}-100*\thisrow{std_l}] {\data};
        \addplot[color1!40, opacity=0.5, forget plot] fill between[of=c1 and c2];

        \pgfplotstableread{plots/txt/folk-eoo-line/HSIC/val_ctl_EOO_m_var_EOO___val_tgt_utility.txt}{\data}
        
        \addplot[color2!60, line width=3pt, smooth] table[x expr={\thisrow{x}*100}, y expr={\thisrow{mean}*100}] {\data};
        \addlegendentry{FairHSIC~\cite{quadrianto2019discovering}}
        
        \addplot[color2!40, name path=c3, opacity=0.5, smooth, forget plot] table[x expr={\thisrow{x}*100}, y expr=100*\thisrow{mean}+100*\thisrow{std}] {\data};
        \addplot[color2!40, name path=c4, opacity=0.5, smooth, forget plot] table[x expr={\thisrow{x}*100}, y expr=100*\thisrow{mean}-100*\thisrow{std}] {\data};
        \addplot[color2!40, opacity=0.5, forget plot] fill between[of=c3 and c4];

        \pgfplotstableread{plots/txt/folk-eoo-line/Kernel-XYS/val_ctl_EOO_m_var_EOO___val_tgt_utility.txt}{\data}
        
        \addplot[color5, line width=3pt, smooth] table[x expr={\thisrow{x}*100}, y expr={\thisrow{mean}*100}] {\data};
        \addlegendentry{\methodName-LST}
        
        \addplot[color5, name path=c5, opacity=0.5, smooth, forget plot] table[x expr={\thisrow{x}*100}, y expr=100*\thisrow{mean}-100*\thisrow{std}] {\data};
        \addplot[color5, name path=c6, opacity=0.5, smooth, forget plot] table[x expr={\thisrow{x}*100}, y expr=100*\thisrow{mean}+100*\thisrow{std}] {\data};
        \addplot[color5, opacity=0.5, forget plot] fill between[of=c5 and c6];

        \pgfplotstableread{plots/txt/folk-eoo-line/OptNet/val_ctl_EOO_m_var_EOO___val_tgt_utility.txt}{\data}
        
        \addplot[color3!40, line width=3pt, smooth] table[x expr={\thisrow{x}*100}, y expr={\thisrow{mean}*100}] {\data};
        \addlegendentry{OptNet-ARL~\cite{sadeghi2021adversarial}}
        
        \addplot[color3!40, name path=c5, opacity=0.5, smooth, forget plot] table[x expr={\thisrow{x}*100}, y expr=100*\thisrow{mean}+100*\thisrow{std}] {\data};;
        \addplot[color3!40, name path=c6, opacity=0.5, smooth, forget plot] table[x expr={\thisrow{x}*100}, y expr=100*\thisrow{mean}-100*\thisrow{std}] {\data};;
        \addplot[color3!40, opacity=0.5, forget plot] fill between[of=c5 and c6];
        
        \pgfplotstableread{plots/txt/folk-eoo-line/DST/val_ctl_EOO_m_var_EOO___val_tgt_utility.txt}{\data}
        
        \addplot[color4, line width=3pt, smooth] table[x expr={\thisrow{x}*100}, y expr={\thisrow{mean}*100}] {\data};
        \addlegendentry{\methodName-DST}
        
        \addplot[color4, name path=c7, opacity=0.5, smooth, forget plot] table[x expr={\thisrow{x}*100}, y expr=100*\thisrow{mean}+100*\thisrow{std}] {\data};;
        \addplot[color4, name path=c8, opacity=0.5, smooth, forget plot] table[x expr={\thisrow{x}*100}, y expr=100*\thisrow{mean}-100*\thisrow{std}] {\data};;
        \addplot[color4, opacity=0.5, forget plot] fill between[of=c7 and c8];
        
    \legend{}
\end{axis}
\end{tikzpicture}
        }
        \caption{}
        \label{fig:results:folktable:eoo}
    \end{subfigure}
    \hspace{0.15em}
    \captionsetup[subfigure]{oneside,margin={0.5em,0em}}
    \begin{subfigure}[c]{\subf\linewidth}
        \centering
        \scalebox{\scb}{
        \begin{tikzpicture}[scale=1.0]
\begin{axis}
        [ xlabel=\large{DPV (\%)}, 
        ymajorticks=false,
        legend style={nodes={scale=1.2, transform shape}, /tikz/every even column/.append style={row sep=0.305cm}},
        xmin=1.4, xmax=22, ymin=0.0, ymax=110, grid=major, grid style=dotted, legend style={at={(1.03,1)},anchor=north west},
        xtick distance=5, ylabel near ticks,
        yticklabel style={/pgf/number format/fixed}, xticklabel style={/pgf/number format/fixed}]

        \definecolor{color1}{HTML}{1f77b4} %
        \definecolor{color2}{HTML}{ff7f0e} %
        \definecolor{color3}{HTML}{2ca02c} %
        \definecolor{color4}{HTML}{d62728} %
        \definecolor{color5}{HTML}{9467bd} %
        \definecolor{color6}{HTML}{8c564b} %
        \definecolor{color7}{HTML}{e377c2} %
        \definecolor{color8}{HTML}{7f7f7f} %
        \definecolor{color9}{HTML}{bcbd22} %
        \definecolor{color10}{HTML}{17becf} %

        \pgfplotstableread{plots/txt/folk-dpv-line/ARL/val_ctl_SP_m_var_SP___val_tgt_utility.txt}{\data}
        
        \addplot[color1!60, line width=3pt, smooth] table[x expr={\thisrow{x}*100}, y expr={\thisrow{mean}*100}] {\data};
        \addlegendentry{ARL~\cite{xie2017controllable}}
        
        \addplot[color1!40, name path=c1, opacity=0.5, smooth, forget plot] table[x expr={\thisrow{x}*100}, y expr=100*\thisrow{mean}+100*\thisrow{std}] {\data};
        \addplot[color1!40, name path=c2, opacity=0.5, smooth, forget plot] table[x expr={\thisrow{x}*100}, y expr=100*\thisrow{mean}-100*\thisrow{std}] {\data};
        \addplot[color1!40, opacity=0.5, forget plot] fill between[of=c1 and c2];

        \pgfplotstableread{plots/txt/folk-dpv-line/HSIC/val_ctl_SP_m_var_SP___val_tgt_utility.txt}{\data}
        
        \addplot[color2!60, line width=3pt, smooth] table[x expr={\thisrow{x}*100}, y expr={\thisrow{mean}*100}] {\data};
        \addlegendentry{FairHSIC~\cite{quadrianto2019discovering}}
        
        \addplot[color2!40, name path=c3, opacity=0.5, smooth, forget plot] table[x expr={\thisrow{x}*100}, y expr=100*\thisrow{mean}+100*\thisrow{std}] {\data};
        \addplot[color2!40, name path=c4, opacity=0.5, smooth, forget plot] table[x expr={\thisrow{x}*100}, y expr=100*\thisrow{mean}-100*\thisrow{std}] {\data};
        \addplot[color2!40, opacity=0.5, forget plot] fill between[of=c3 and c4];

        \pgfplotstableread{plots/txt/folk-dpv-line/OptNet/val_ctl_SP_m_var_SP___val_tgt_utility.txt}{\data}
        
        \addplot[color3!60, line width=3pt, smooth] table[x expr={\thisrow{x}*100}, y expr={\thisrow{mean}*100}] {\data};
        \addlegendentry{OptNet-ARL~\cite{sadeghi2021adversarial}}
        
        \addplot[color3!40, name path=c5, opacity=0.5, smooth, forget plot] table[x expr={\thisrow{x}*100}, y expr=100*\thisrow{mean}+100*\thisrow{std}] {\data};
        \addplot[color3!40, name path=c6, opacity=0.5, smooth, forget plot] table[x expr={\thisrow{x}*100}, y expr=100*\thisrow{mean}-100*\thisrow{std}] {\data};
        \addplot[color3!40, opacity=0.5, forget plot] fill between[of=c5 and c6];

        \pgfplotstableread{plots/txt/folk-dpv-line/MaxEnt/val_ctl_SP_m_var_SP___val_tgt_utility.txt}{\data}
        
        \addplot[color6!60, line width=3pt, smooth] table[x expr={\thisrow{x}*100}, y expr={\thisrow{mean}*100}] {\data};
        \addlegendentry{MaxEnt-ARL~\cite{roy2019mitigating}}
        
        \addplot[color6!40, name path=c5, opacity=0.5, smooth, forget plot] table[x expr={\thisrow{x}*100}, y expr=100*\thisrow{mean}+100*\thisrow{std}] {\data};
        \addplot[color6!40, name path=c6, opacity=0.5, smooth, forget plot] table[x expr={\thisrow{x}*100}, y expr=100*\thisrow{mean}-100*\thisrow{std}] {\data};
        \addplot[color6!40, opacity=0.5, forget plot] fill between[of=c5 and c6];

        \pgfplotstableread{plots/txt/folk-dpv-line/Kernel-XYS/val_ctl_SP_m_var_SP___val_tgt_utility.txt}{\data}
        
        \addplot[color5, line width=3pt, smooth] table[x expr={\thisrow{x}*100}, y expr={\thisrow{mean}*100}] {\data};
        \addlegendentry{\textbf{\methodName-LST}}
        
        \addplot[color5, name path=c5, opacity=0.5, smooth, forget plot] table[x expr={\thisrow{x}*100}, y expr=100*\thisrow{mean}+100*\thisrow{std}] {\data};
        \addplot[color5, name path=c6, opacity=0.5, smooth, forget plot] table[x expr={\thisrow{x}*100}, y expr=100*\thisrow{mean}-100*\thisrow{std}] {\data};
        \addplot[color5, opacity=0.5, forget plot] fill between[of=c5 and c6];
        
        \pgfplotstableread{plots/txt/folk-dpv-line/DST/val_ctl_SP_m_var_SP___val_tgt_utility.txt}{\data}
        
        \addplot[color4, line width=3pt, smooth] table[x expr={\thisrow{x}*100}, y expr={\thisrow{mean}*100}] {\data};
        \addlegendentry{\textbf{\methodName-DST}}
        
        \addplot[color4, name path=c7, opacity=0.5, smooth, forget plot] table[x expr={\thisrow{x}*100}, y expr=100*\thisrow{mean}+100*\thisrow{std}] {\data};
        \addplot[color4, name path=c8, opacity=0.5, smooth, forget plot] table[x expr={\thisrow{x}*100}, y expr=100*\thisrow{mean}-100*\thisrow{std}] {\data};
        \addplot[color4, opacity=0.5, forget plot] fill between[of=c7 and c8];

        \pgfplotstableread{plots/txt/folk-dpv-line/TMLR/val_ctl_SP_m_var_SP___val_tgt_utility.txt}{\data}
        
        \addplot[color7!60, line width=3pt, smooth] table[x expr={\thisrow{x}*100}, y expr={\thisrow{mean}*100}] {\data};
        \addlegendentry{K-$\mathcal T_{\text{Opt}}$~\cite{sadeghi2022on}}
        
        \addplot[color7!40, name path=c5, opacity=0.5, smooth, forget plot] table[x expr={\thisrow{x}*100}, y expr=100*\thisrow{mean}+100*\thisrow{std}] {\data};
        \addplot[color7!40, name path=c6, opacity=0.5, smooth, forget plot] table[x expr={\thisrow{x}*100}, y expr=100*\thisrow{mean}-100*\thisrow{std}] {\data};
        \addplot[color7!40, opacity=0.5, forget plot] fill between[of=c5 and c6];
        
\end{axis}
\end{tikzpicture}
        }
        \caption{}
        \label{fig:results:folktable:dpv}
    \end{subfigure}
    \end{adjustwidth*}
    \caption{\textbf{Evaluating Fair Representation Learning Methods:} Accuracy versus fairness trade-offs on CelebA (a)-(c) and FolkTable (d)-(f). (a) and (d) show the trade-off for Equalized Opportunity as the fairness constraint. (b) and (e) show the trade-off for Equality of Odds as the fairness constraint, and (c) and (f) show the trade-off for Demographic Parity as the fairness constraint. The solid lines represent the mean accuracy at a given fairness value, and the shaded region shows the uncertainty of the trade-off. Both DST and LST estimates from \methodName\ are stable. Among the FRL methods, K-$\mathcal T_{\text{Opt}}$ is closest to the DST, while ARL has the most variance.\label{fig:results:eo-eoo-dpv}}
\end{figure*}
\section{Scheme for Evaluating Representations\label{sec:metrics}}
The primary utility of the trade-offs is in illuminating the fundamental limits of learning algorithms in mitigating unfairness and evaluating the effectiveness of learned representations w.r.t. utility and fairness. This includes representations that provide a single solution in the utility-fairness plane and multiple solutions that span the trade-off between utility and fairness. Fairness evaluations in prior literature focused primarily on relative comparisons of the models to each other. Such a comparison, however, precludes an understanding of how far the solution is from the inherent limits of the task. Elucidating and numerically quantifying the inherent trade-offs facilitates such an understanding and drives further algorithmic development.

A standard way to evaluate bi-objective solutions is to plot them on a 2-D plane, identify non-dominated solutions, or compare their dominance w.r.t. each other. 
More details can be found in the supplementary material.

\section{Experimental Evaluation\label{sec:experiments}}

We designed experiments to answer the following:
\begin{enumerate}
    \item How far are existing supervised fair representation learning methods from the two trade-offs? (\S\ref{sec:experiments:results-frl})
    \item How far are zero-shot representations from the two trade-offs? What is the effect of network architecture and pre-training dataset? (\S\ref{sec:experiments:results-vlm})
    \item How far are pre-trained image representations trained in a supervised fashion from the two trade-offs? (\S\ref{sec:experiments:results-supervised})
\end{enumerate}

\subsection{Experimental Setup}

\noindent\textbf{Datasets:\label{sec:experiments:datasets}} We estimate the trade-offs through \methodName\ on an assortment of datasets. 1) \textbf{CelebA~\cite{liu2015deep}} consists of more than 200K face images of celebrities in the wild annotated with 40 binary attributes. 2) \textbf{FairFace~\cite{karkkainen2021fairface}} consists of face images from 7 different race groups labeled with race, sex, and age groups. 3) \textbf{FolkTables~\cite{NEURIPS2021_32e54441}} is a tabular dataset of individuals from fifty states derived from the US Census. 

For experiments on CelebA, the target attribute is \emph{high cheekbones}, and the sensitive attribute is a combination of \emph{sex} and \emph{age} for a total of four classes (young woman, young man, old woman, and old man). For experiments on the FairFace dataset, \emph{sex} (binary) is the target attribute with \emph{race} (7 groups: East Asian, White, Black, Indian, Latino, South Asian, and Middle Eastern) being the sensitive attribute. For the FolkTables dataset, we consider data from Washington State and choose \emph{employment status} (binary) and \emph{age} (discrete value between 1 and 96) of the individuals as the target and sensitive attributes, respectively.

\vspace{3pt}
\noindent\textbf{Metrics:\label{sec:experiments:metrics}}
In all experiments, the utility is measured via classification accuracy. We consider three group fairness metrics (EOD, EOOD, and DPV). The zero-shot CLIP models are evaluated via cosine similarity. Moreover, on the FairFace dataset, we compared the models to the ideal solutions estimated by LST and DST using a weighted normalized Euclidean distance (see supplementary for details). We refer to the distance as $\text{Dist}_{LST}$ and $\text{Dist}_{DST}$, respectively.

\vspace{3pt}
\noindent\textbf{Implementation Details:\label{sec:experiments:implementation-details}} We use ResNet-18 as the feature extractor for \methodName\ and the FRL methods. The final classifier is a two-layer MLP. We evaluate the representations from pre-trained vision models by learning a logistic classifier. To obtain the trade-offs, we run \methodName\, and other FRL methods for multiple values of $\lambda$ between zero and one, where zero corresponds to no fairness constraint, and one corresponds to only fairness.

\vspace{3pt}
\noindent\textbf{Optimizing \methodName{}:} The trade-offs are defined through the dependence terms in \eqref{eq:dep-pop} which involves an expectation over the joint distribution $p(X,Y,S)$. However, due to practical considerations, we only have access to a finite set of samples to estimate the trade-offs. Therefore, we estimate the trade-offs using all the samples available in each dataset without splitting it into train, validation, and test sets. This choice ensures that the estimates account for any possible generalization gap between the train and test distributions and identify the best achievable utility-fairness trade-offs.

\subsection{Evaluating FRL Methods\label{sec:experiments:results-frl}}

\noindent\textbf{FRL Baselines:\label{secf:experiments:frl-baselines}} We consider a wide range of FRL methods based on adversarial learning (ARL~\cite{xie2017controllable} and MaxEnt-ARL~\cite{roy2019mitigating}), dependence measures (FairHSIC~\cite{quadrianto2019discovering}, OptNet-ARL \cite{sadeghi2021adversarial}), and closed-form solvers (K-$\mathcal T_{\text{Opt}}$~\cite{sadeghi2022on}).

\vspace{3pt}
\noindent\textbf{Results:\label{secf:experiments:frl-results}} We estimate the LST and DST through \methodName\ and the trade-offs from the other baselines across various settings and datasets. \cref{fig:results:celeba:eo,fig:results:celeba:eoo,fig:results:celeba:dpv} show the trade-offs on the CelebA dataset for EOD, EOOD, and DPV, respectively. Similarly, \cref{fig:results:folktable:eo,fig:results:folktable:eoo,fig:results:folktable:dpv} show the trade-offs on the Folktable dataset for EOD, EOOD, and DPV, respectively. In the plots, the solid lines represent the mean, and the light shadows represent the variance of the accuracy for a given fairness value. On FairFace, observe that trade-offs do not exist since, on this task, it is possible to mitigate unfairness without sacrificing accuracy. Hence, we present the results of FRL methods and the estimated LST and DST in \cref{tab:results:fairface}. 

\newcommand{\cdst}[0]{\cellcolor{fillcolor6}}
\newcommand{\clst}[0]{\cellcolor{fillcolor}}
\begin{table}[!ht]
  \centering
    \resizebox{\columnwidth}{!}{%
    \begin{tabular}{clcccc}
    \toprule
    & Method & Accuracy ($\uparrow$) & Unfairness ($\downarrow$) & $\text{Dist}_{DST}$ ($\downarrow$) & $\text{Dist}_{LST}$ ($\downarrow$)  \\
    \midrule
    \multirow{5}{*}{\begin{sideways} EOD \end{sideways}}
    & ARL \cite{xie2017controllable}                      &  93.39  &  1.34  & 0.448  & 0.559\\ 
    & FairHSIC \cite{quadrianto2019discovering}              &  91.02  &  1.33  & 0.445  & 0.557\\ 
    & OptNet-ARL \cite{sadeghi2021adversarial}    &  92.94  &  1.70  & 0.598  & 0.709\\ 
    & \cdst \methodName-DST                                   &  \cdst 96.17  &  \cdst 0.263  & \cdst -    & \cdst 0.133\\  
    & \clst \methodName-LST                                   &  \clst 100.0  &   \clst 0.0  & \clst -  & \clst - \\ 
    \midrule
    \multirow{5}{*}{\begin{sideways} EOOD \end{sideways}}
    & ARL \cite{xie2017controllable}                      &  91.60  &  3.04  & 0.447  & 0.71\\ 
    & FairHSIC \cite{quadrianto2019discovering}              &  93.43  &  2.13  & 0.236  & 0.498\\ 
    & OptNet-ARL \cite{sadeghi2021adversarial}    &  93.39  &  2.34  & 0.284  & 0.546\\ 
    & \cdst \methodName-DST                             &  \cdst 97.93  &  \cdst 1.126  & \cdst -  & \cdst 0.262 \\ 
    & \clst \methodName-LST                             &  \clst 100.0  &  \clst 0.0   & \clst -  & \clst - \\ 
    \midrule
    \multirow{5}{*}{\begin{sideways} DPV \end{sideways}}
    & ARL \cite{xie2017controllable}                      & 92.49  &  6.09   &  0.350  & 0.351\\ 
    & FairHSIC \cite{quadrianto2019discovering}              & 91.41  &  5.91   &  0.329  & 0.332\\ 
    & OptNet-ARL \cite{sadeghi2021adversarial}    & 93.33  &  5.80   &  0.316 & 0.317\\ 
    & \cdst\methodName-DST             & \cdst 94.39  & \cdst 3.082   & \cdst - & \cdst 0.04 \\ 
    & \clst \methodName-LST            &  \clst 100.0 &   \clst 3.10   &  \clst - & \clst - \\ 
    \bottomrule
    \end{tabular}}%
  \caption{Evaluation of FRL methods on FairFace based on the distance to DST and LST estimated by \methodName. Color corresponds to the DST and LST trade-offs.\label{tab:results:fairface}}
  \vspace{-1em}
\end{table}%

\vspace{3pt}
\noindent\textbf{Observations:\label{secf:experiments:frl-observations}} Although K-$\mathcal T_{\text{Opt}}$, OptNet-ARL and FairHSIC can achieve near-optimal accuracy in most cases, they are unable to span the whole range of fairness values. ARL is the most unstable but can span the whole range of fairness values. K-$\mathcal T_{\text{Opt}}$ is the most stable method due to its closed-form solver, but is unable to span the whole range of fairness values (\cref{fig:results:celeba:dpv,fig:results:folktable:dpv}).

The gap between LST and DST demonstrates the information gap in $X$ for predicting $Y$. From \cref{fig:results:eo-eoo-dpv}, we observe that the gap is $\sim$20\% of accuracy in low fairness regions and $\sim$40\% in high fairness regions for EOD and EOOD. For DPV, the trend reverses with a gap of $\sim$20\% for low fairness and gradually decreases with increasing unfairness.

Observe that the LST in \cref{fig:results:celeba:eo,fig:results:celeba:eoo} and \cref{fig:results:folktable:eo,fig:results:folktable:eoo} for EOD and EOOD is almost flat at 100\% accuracy. This observation, however, is unsurprising since EOD and EOOD both condition on the label $Y$, and thus an ideal classifier with 100\% accuracy (i.e., $\hat{Y}=Y$) will have zero EOD and EOOD. And, in LST, the Oracle classifier is 100\% accurate since it has access to $Y$ and $S$. So, the LST has sufficient information to minimize EOD and EOOD without sacrificing utility. The same, however, does not hold for DPV since it does not consider the target labels in its definition. Based on the above discussion, we deduce that EOD and EOOD are more pragmatic fairness metrics than DPV since they do not force the model to sacrifice predictive accuracy to ensure fairness. Thus, both offer a more balanced and practical approach to measuring fairness. Our empirical results provides independent confirmation of the same observations in \cite{chouldechova2017fair, hardt2016equality}.

The comparison of FRL methods in \cref{tab:results:fairface} based on $\text{Dist}_{LST}$ suggests that when models are optimized for EOD, ARL and FairHSIC find solutions closer to LST and DST than OptNet-ARL. When models are optimized for EOOD, FairHSIC finds the closest point to the LST and DST. OptNet-ARL performs slightly better than the other FRL methods when optimized for reducing DPV. 

\subsection{Evaluating Zero-Shot CLIP Models\label{sec:experiments:results-vlm}}

\vspace{3pt}
\noindent\textbf{Zero-Shot Models:\label{secf:experiments:zeroshot-baselines}} To study the fairness of current zero-shot models, we consider more than 100 pre-trained models from OpenCLIP~\cite{ilharco_gabriel_2021_5143773} and evaluate them on CelebA and FairFace for the same target and sensitive labels as before.  

\vspace{3pt}
\noindent\textbf{Results:\label{secf:experiments:zeroshot-results}} \cref{fig:results:celeba-zeroshot,fig:results:fairface-zeroshot} show results on CelebA and FairFace, respectively, for three group fairness definitions. Each point represents the result of one zero-shot CLIP model, with the color denoting the model's pre-training dataset. Plots also include DST and LST for comparison.

\begin{figure*}[!ht]
    \centering
    \begin{subfigure}[c]{0.49\linewidth}
    \centering
    \includegraphics[width=\linewidth]{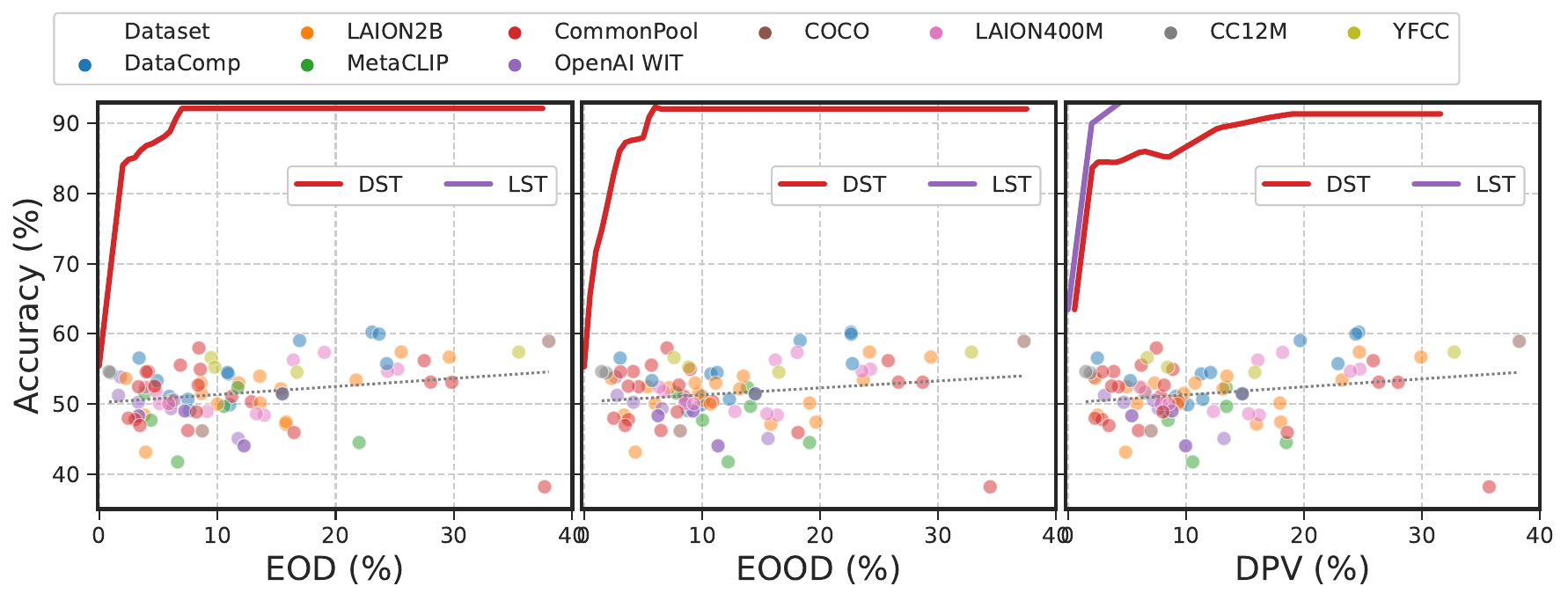}
    \caption{\label{fig:results:celeba-zeroshot}}  
    \end{subfigure}
    \begin{subfigure}[c]{0.49\linewidth}
        \centering
    \includegraphics[width=\linewidth]{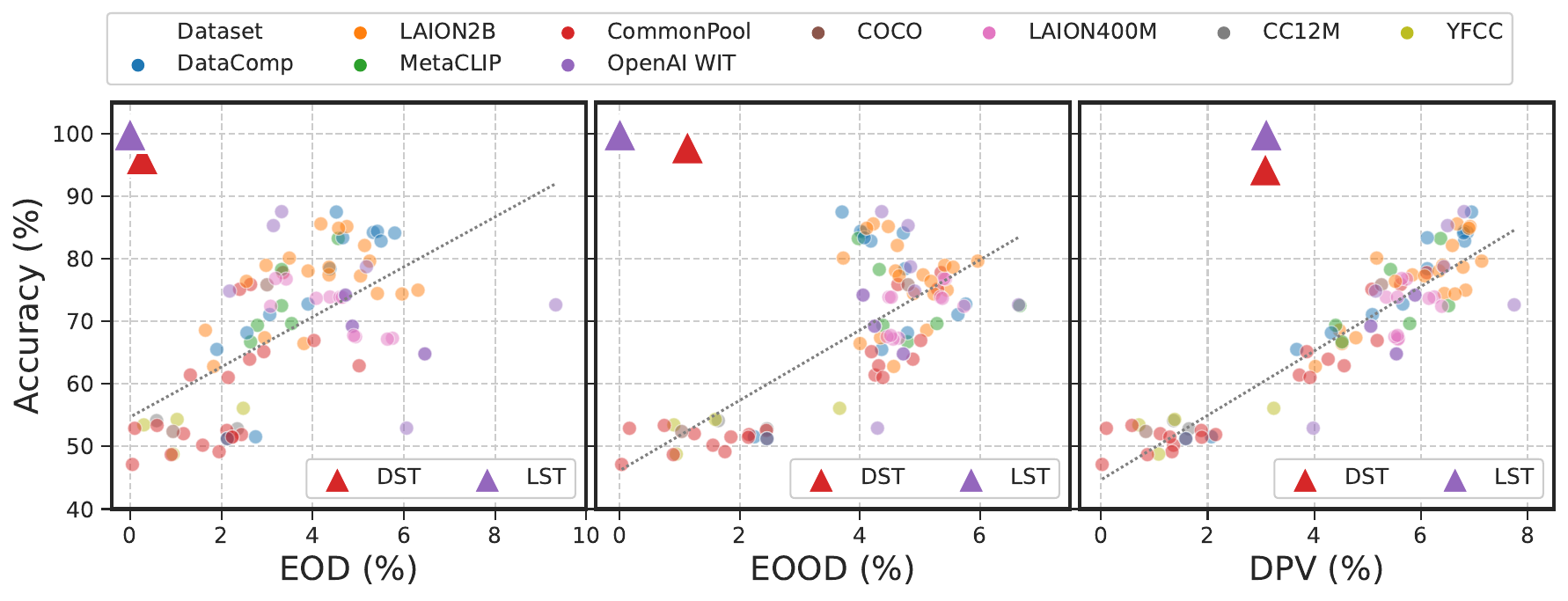}
    \caption{\label{fig:results:fairface-zeroshot}}   
    \end{subfigure}
    \begin{subfigure}[c]{0.49\linewidth}
    \centering
    \includegraphics[width=\linewidth]{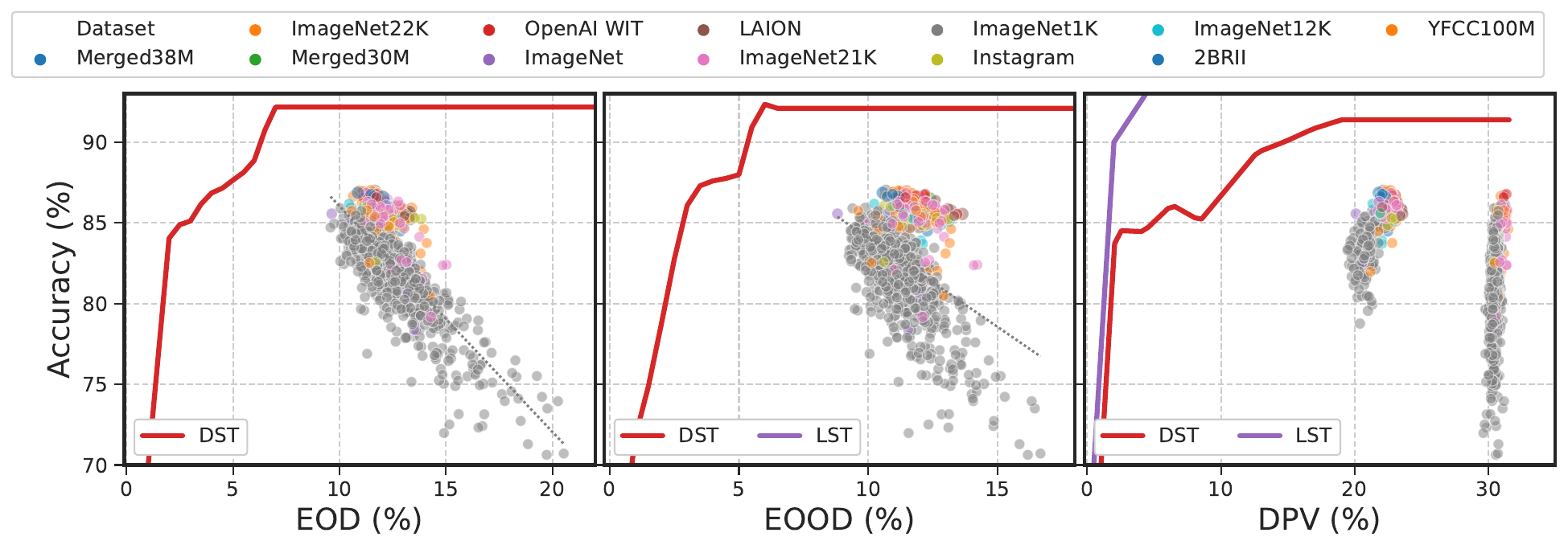}
    \caption{\label{fig:results:celeba-supervised}}    
    \end{subfigure}
    \begin{subfigure}[c]{0.49\linewidth}
    \centering
    \includegraphics[width=\linewidth]{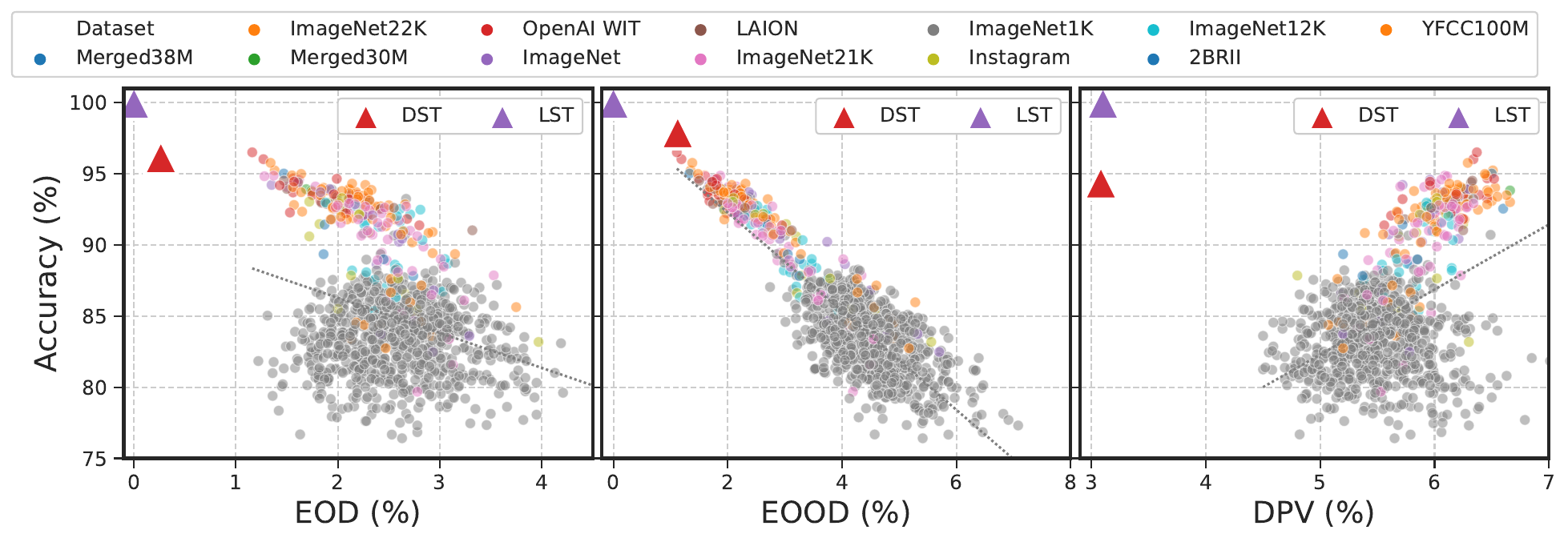}
    \caption{\label{fig:results:fairface-supervised}}   
    \end{subfigure}
    \vspace{-0.75em}
    \caption{\textbf{Evalauting Pre-Trained Zero-Shot and Supervised Models:} Accuracy-fairness evaluation of more than 100 pre-trained zero-shot models on CelebA (a) and  FairFace (b), and over 900 pre-trained image representations on CelebA (c), and  FairFace (d).\label{fig:results:zeroshot-supervised}}
    \vspace{-1em}
\end{figure*}

\vspace{3pt}
\noindent\textbf{Observations:\label{secf:experiments:zeroshot-observations}} 
From \cref{fig:results:celeba-zeroshot}, we observe that zero-shot models perform poorly, in terms of accuracy, on the CelebA task. We hypothesize that it is so since the target task (predicting \emph{high cheekbones}) is uncommon, even in the large datasets the models have been trained on. From a fairness perspective, we observe that models trained on CommonPool~\cite{gadre2023datacomp} (red dots) are more likely to be fair, while models trained on DataComp~\cite{gadre2023datacomp} have marginally better accuracy over the other models. Finally, the CLIP models are very far from the DST across the board.

On FairFace (\cref{fig:results:fairface-zeroshot}), since the target task (sex) is abundantly represented in all large-scale pre-training datasets, we observe that the CLIP models exhibit high levels of accuracy. Two CLIP models pre-trained on OpenAI WIT~\cite{radford2021learning} are the closest to DST and LST w.r.t. EOD. Similar to our observations on CelebA, models pre-trained on the CommonPool dataset are more likely to be fair but at the cost of accuracy. In contrast, models pre-trained on DataComp are more unfair but have greater accuracy. Finally, there is a clear positive correlation between accuracy and unfairness across all fairness metrics on FairFace (\cref{fig:results:fairface-zeroshot}).

\subsection{Evaluating Supervised Representations \label{sec:experiments:results-supervised}}
\noindent\textbf{Supervised Baselines:\label{secf:experiments:supervised-baselines}} To study the fairness of image representations from models pre-trained in a supervised fashion, we consider more than 900 models from Pytorch Image Models \cite{rw2019timm} and evaluate them on CelebA and FairFace for the same target and sensitive labels as before.

\vspace{3pt}
\noindent\textbf{Results:\label{sec:experiments:supervised-results}} \cref{fig:results:celeba-supervised,fig:results:fairface-supervised} show results on CelebA and FairFace datasets, respectively. Each point represents the result of one supervised model, with color denoting the model's pre-training dataset. Plots also include DST and LST for comparison, but some plots were magnified for better resolution. So, the LST may only be partially visible. 

\vspace{3pt}
\noindent\textbf{Observations:\label{sec:experiments:supervised-observations}} From the CelebA results in \cref{fig:results:celeba-supervised}, we observe a high positive correlation between the accuracy and EOD and EOOD. Thus, models with better accuracy are also more fair. And, in contrast to the zero-shot models, even though \emph{high cheekbones} is a rare label, the representations have sufficient information to accurately detect it with a logistic regression classifier. Specifically, models pre-trained on ImageNet22K \cite{ridnik2021imagenet} (orange dots), and OpenAI WIT \cite{radford2021learning} have the best accuracy and reach the DST trade-off in high unfairness regions. However, with more than 11\% EOD and EOOD and more than 20\% DPV, they have significant levels of bias between the two sexes.

Results on FairFace in \cref{fig:results:fairface-supervised} also reiterate that models trained on OpenAI WIT and ImageNet22K are more fair and more accurate than other datasets. We also observe that the models are generally more fair on FairFace than on CelebA. A similar positive correlation exists between accuracy and fairness for EOD and EOOD. We also make an interesting observation from \cref{fig:results:fairface-supervised}. Some models surpass the DST and enter the \emph{``Possible with Extra Data"} region of the utility-fairness plane illustrated in \cref{fig:teaser-ideal}. Recall that DST estimates the upper bound of the \emph{possible} region for models trained with the same data as DST. The DST can be surpassed to enter the \emph{``Possible with Extra Data"} region if the model is trained on additional data beyond what is used for estimating DST. From \cref{fig:results:fairface-supervised} (middle), we observe that a few models trained on OpenAI, LAION~\cite{schuhmann2022laion}, ImageNet22K, ImageNet21K, and ImageNet12K can surpass the DST, plausibly since these datasets contain sufficient---both quality and quantity---samples from the distribution of the target attribute (\emph{sex}).

\section{Concluding Remarks\label{sec:conclusion}}
As image classification systems are widely deployed in high-stakes applications, ensuring that their predictions do not exhibit demographic bias is paramount for gaining user trust. While it is desirable to mitigate bias without sacrificing accuracy, this is not always possible. This paper studied such inherent trade-offs between utility and fairness. First, we identified two types of trade-offs called \emph{Data-Space} and \emph{Label-Space} trade-offs corresponding to those achievable with and without data restrictions. But unlike prior theoretical studies on utility-fairness trade-offs, next, we focused on developing algorithmic tools for quantifying the trade-offs from data and proposed \methodName{}. As an illustration of its practical utility, we estimated the trade-offs on several image classification tasks, facilitating a large-scale evaluation of over 100 zero-shot and 900 supervised pre-trained models. The results revealed that, out of the box, pre-trained models are far from the best achievable limits of accuracy and fairness. Furthermore, we identified that, in some cases, larger datasets can improve accuracy and fairness and surpass the solutions represented by the DST.

\methodName{} was designed as a composition of a neural network with the last layer optimized to global optimality through a closed-form solver for a given feature representation. This design allowed it to estimate the two trade-offs reliably. However, \methodName{} does not provide convergence or optimality guarantees. Therefore, the estimated DST and LST are likely to be suboptimal. Nonetheless, they can serve as a valuable tool for understanding the nature of the problem (e.g., Does there exist a trade-off? or What are the \emph{possible}, \emph{impossible} and \emph{possible with extra data} regions in the utility-fairness plane?) and how far a given fair learning algorithm is from the achievable limits. Furthermore, depending on which region of the utility-fairness plane a solution is and how far it is from the DST and LST reveals whether to focus on better optimization or better data.

\noindent\textbf{Acknowledgements:} This work was supported by the National Science Foundation (award \#2147116).

{\small
\bibliographystyle{ieeenat_fullname}
\bibliography{egbib}

\begin{thebibliography}{41}
\providecommand{\natexlab}[1]{#1}
\providecommand{\url}[1]{\texttt{#1}}
\expandafter\ifx\csname urlstyle\endcsname\relax
  \providecommand{\doi}[1]{doi: #1}\else
  \providecommand{\doi}{doi: \begingroup \urlstyle{rm}\Url}\fi

\bibitem[Adeli et~al.(2021)Adeli, Zhao, Pfefferbaum, Sullivan, Fei-Fei, Niebles, and Pohl]{adeli2021representation}
Ehsan Adeli, Qingyu Zhao, Adolf Pfefferbaum, Edith~V Sullivan, Li Fei-Fei, Juan~Carlos Niebles, and Kilian~M Pohl.
\newblock Representation learning with statistical independence to mitigate bias.
\newblock \emph{IEEE/CVF Winter Conference on Applications of Computer Vision}, 2021.

\bibitem[Barocas et~al.(2023)Barocas, Hardt, and Narayanan]{barocas2019fairness}
Solon Barocas, Moritz Hardt, and Arvind Narayanan.
\newblock \emph{Fairness and Machine Learning: Limitations and Opportunities}.
\newblock MIT Press, 2023.

\bibitem[Belrose et~al.(2023)Belrose, Schneider-Joseph, Ravfogel, Cotterell, Raff, and Biderman]{belrose2023leace}
Nora Belrose, David Schneider-Joseph, Shauli Ravfogel, Ryan Cotterell, Edward Raff, and Stella Biderman.
\newblock {LEACE}: Perfect linear concept erasure in closed form.
\newblock \emph{arXiv preprint arXiv:2306.03819}, 2023.

\bibitem[Bengio et~al.(2013)Bengio, Courville, and Vincent]{bengio2013representation}
Yoshua Bengio, Aaron Courville, and Pascal Vincent.
\newblock Representation learning: A review and new perspectives.
\newblock \emph{IEEE Transactions on Pattern Analysis and Machine Intelligence}, 35\penalty0 (8):\penalty0 1798--1828, 2013.

\bibitem[Buolamwini and Gebru(2018)]{buolamwini2018gender}
Joy Buolamwini and Timnit Gebru.
\newblock Gender shades: Intersectional accuracy disparities in commercial gender classification.
\newblock In \emph{Conference on Fairness, Accountability and Transparency}, 2018.

\bibitem[Chouldechova(2017)]{chouldechova2017fair}
Alexandra Chouldechova.
\newblock Fair prediction with disparate impact: A study of bias in recidivism prediction instruments. big data 5, 2 (2017), 153--163.
\newblock \emph{arXiv preprint arXiv:1610.07524}, 2017.

\bibitem[Dehdashtian et~al.(2024)Dehdashtian, Wang, and Boddeti]{dehdashtian2024fairerclip}
Sepehr Dehdashtian, Lan Wang, and Vishnu Boddeti.
\newblock Fairerclip: Debiasing clip's zero-shot predictions using functions in rkhss.
\newblock In \emph{International Conference on Learning Representations}, 2024.

\bibitem[Ding et~al.(2021)Ding, Hardt, Miller, and Schmidt]{NEURIPS2021_32e54441}
Frances Ding, Moritz Hardt, John Miller, and Ludwig Schmidt.
\newblock Retiring adult: New datasets for fair machine learning.
\newblock In \emph{Advances in Neural Information Processing Systems}, 2021.

\bibitem[Fukumizu et~al.(2007)Fukumizu, Bach, and Gretton]{fukumizu2007statistical}
Kenji Fukumizu, Francis~R Bach, and Arthur Gretton.
\newblock Statistical consistency of kernel canonical correlation analysis.
\newblock \emph{Journal of Machine Learning Research}, 8\penalty0 (2), 2007.

\bibitem[Gadre et~al.(2023)Gadre, Ilharco, Fang, Hayase, Smyrnis, Nguyen, Marten, Wortsman, Ghosh, Zhang, et~al.]{gadre2023datacomp}
Samir~Yitzhak Gadre, Gabriel Ilharco, Alex Fang, Jonathan Hayase, Georgios Smyrnis, Thao Nguyen, Ryan Marten, Mitchell Wortsman, Dhruba Ghosh, Jieyu Zhang, et~al.
\newblock Datacomp: In search of the next generation of multimodal datasets.
\newblock \emph{arXiv preprint arXiv:2304.14108}, 2023.

\bibitem[Gong et~al.(2021)Gong, Liu, and Jain]{gong2021mitigating}
Sixue Gong, Xiaoming Liu, and Anil~K Jain.
\newblock Mitigating face recognition bias via group adaptive classifier.
\newblock In \emph{IEEE/CVF Conference on Computer Vision and Pattern Recognition}, 2021.

\bibitem[Gouic et~al.(2020)Gouic, Loubes, and Rigollet]{gouic2020projection}
Thibaut~Le Gouic, Jean-Michel Loubes, and Philippe Rigollet.
\newblock Projection to fairness in statistical learning.
\newblock \emph{arXiv preprint arXiv:2005.11720}, 2020.

\bibitem[Grari et~al.(2020)Grari, Hajouji, Lamprier, and Detyniecki]{grari2020learning}
Vincent Grari, Oualid~El Hajouji, Sylvain Lamprier, and Marcin Detyniecki.
\newblock Learning unbiased representations via {R\'e}nyi minimization.
\newblock \emph{arXiv preprint arXiv:2009.03183}, 2020.

\bibitem[Gretton et~al.(2005)Gretton, Herbrich, Smola, Bousquet, and Sch{\"o}lkopf]{gretton2005kernel}
Arthur Gretton, Ralf Herbrich, Alexander Smola, Olivier Bousquet, and Bernhard Sch{\"o}lkopf.
\newblock Kernel methods for measuring independence.
\newblock \emph{Journal of Machine Learning Research}, 6\penalty0 (12):\penalty0 2075--2129, 2005.

\bibitem[Hardt et~al.(2016)Hardt, Price, Srebro, et~al.]{hardt2016equality}
Moritz Hardt, Eric Price, Nati Srebro, et~al.
\newblock Equality of opportunity in supervised learning.
\newblock In \emph{Advances in Neural Information Processing Systems}, 2016.

\bibitem[He et~al.(2016)He, Zhang, Ren, and Sun]{he2016deep}
Kaiming He, Xiangyu Zhang, Shaoqing Ren, and Jian Sun.
\newblock Deep residual learning for image recognition.
\newblock \emph{Proceedings of the IEEE Conference on Computer Vision and Pattern Recognition}, 2016.

\bibitem[Ilharco et~al.(2021)Ilharco, Wortsman, Wightman, Gordon, Carlini, Taori, Dave, Shankar, Namkoong, Miller, Hajishirzi, Farhadi, and Schmidt]{ilharco_gabriel_2021_5143773}
Gabriel Ilharco, Mitchell Wortsman, Ross Wightman, Cade Gordon, Nicholas Carlini, Rohan Taori, Achal Dave, Vaishaal Shankar, Hongseok Namkoong, John Miller, Hannaneh Hajishirzi, Ali Farhadi, and Ludwig Schmidt.
\newblock Openclip, 2021.

\bibitem[Karkkainen and Joo(2021)]{karkkainen2021fairface}
Kimmo Karkkainen and Jungseock Joo.
\newblock {FairFace}: Face attribute dataset for balanced race, gender, and age for bias measurement and mitigation.
\newblock In \emph{IEEE/CVF Winter Conference on Applications of Computer Vision}, 2021.

\bibitem[Kilbertus et~al.(2017)Kilbertus, Rojas~Carulla, Parascandolo, Hardt, Janzing, and Sch{\"o}lkopf]{kilbertus2017avoiding}
Niki Kilbertus, Mateo Rojas~Carulla, Giambattista Parascandolo, Moritz Hardt, Dominik Janzing, and Bernhard Sch{\"o}lkopf.
\newblock Avoiding discrimination through causal reasoning.
\newblock \emph{Advances in Neural Information Processing Systems}, 2017.

\bibitem[Kokiopoulou et~al.(2011)Kokiopoulou, Chen, and Saad]{kokiopoulou2011trace}
Effrosini Kokiopoulou, Jie Chen, and Yousef Saad.
\newblock Trace optimization and eigenproblems in dimension reduction methods.
\newblock \emph{Numerical Linear Algebra with Applications}, 18\penalty0 (3):\penalty0 565--602, 2011.

\bibitem[Liu et~al.(2015)Liu, Luo, Wang, and Tang]{liu2015deep}
Ziwei Liu, Ping Luo, Xiaogang Wang, and Xiaoou Tang.
\newblock Deep learning face attributes in the wild.
\newblock In \emph{IEEE International Conference on Computer Vision}, 2015.

\bibitem[Loshchilov and Hutter(2016)]{loshchilov2016sgdr}
Ilya Loshchilov and Frank Hutter.
\newblock {SGDR}: Stochastic gradient descent with warm restarts.
\newblock \emph{arXiv preprint arXiv:1608.03983}, 2016.

\bibitem[Madras et~al.(2018)Madras, Creager, Pitassi, and Zemel]{madras2018learning}
David Madras, Elliot Creager, Toniann Pitassi, and Richard Zemel.
\newblock Learning adversarially fair and transferable representations.
\newblock \emph{arXiv preprint arXiv:1802.06309}, 2018.

\bibitem[McNamara et~al.(2019)McNamara, Ong, and Williamson]{mcnamara2019costs}
Daniel McNamara, Cheng~Soon Ong, and Robert~C Williamson.
\newblock Costs and benefits of fair representation learning.
\newblock \emph{AAAI/ACM Conference on AI, Ethics, and Society}, 2019.

\bibitem[Menon and Williamson(2018)]{menon2018cost}
Aditya~Krishna Menon and Robert~C Williamson.
\newblock The cost of fairness in binary classification.
\newblock \emph{Conference on Fairness, Accountability and Transparency}, 2018.

\bibitem[Quadrianto et~al.(2019)Quadrianto, Sharmanska, and Thomas]{quadrianto2019discovering}
Novi Quadrianto, Viktoriia Sharmanska, and Oliver Thomas.
\newblock Discovering fair representations in the data domain.
\newblock \emph{IEEE/CVF Conference on Computer Vision and Pattern Recognition}, 2019.

\bibitem[Radford et~al.(2021)Radford, Kim, Hallacy, Ramesh, Goh, Agarwal, Sastry, Askell, Mishkin, Clark, et~al.]{radford2021learning}
Alec Radford, Jong~Wook Kim, Chris Hallacy, Aditya Ramesh, Gabriel Goh, Sandhini Agarwal, Girish Sastry, Amanda Askell, Pamela Mishkin, Jack Clark, et~al.
\newblock Learning transferable visual models from natural language supervision.
\newblock In \emph{International Conference on Machine Learning}, 2021.

\bibitem[Ridnik et~al.(2021)Ridnik, Ben-Baruch, Noy, and Zelnik-Manor]{ridnik2021imagenet}
Tal Ridnik, Emanuel Ben-Baruch, Asaf Noy, and Lihi Zelnik-Manor.
\newblock Imagenet-21k pretraining for the masses.
\newblock \emph{arXiv preprint arXiv:2104.10972}, 2021.

\bibitem[Roy and Boddeti(2019)]{roy2019mitigating}
Proteek Roy and Vishnu~Naresh Boddeti.
\newblock Mitigating information leakage in image representations: A maximum entropy approach.
\newblock \emph{IEEE Conference on Computer Vision and Pattern Recognition}, 2019.

\bibitem[Sadeghi et~al.(2019)Sadeghi, Yu, and Boddeti]{sadeghi2019global}
Bashir Sadeghi, Runyi Yu, and Vishnu Boddeti.
\newblock On the global optima of kernelized adversarial representation learning.
\newblock \emph{IEEE International Conference on Computer Vision}, 2019.

\bibitem[Sadeghi et~al.(2021)Sadeghi, Wang, and Boddeti]{sadeghi2021adversarial}
Bashir Sadeghi, Lan Wang, and Vishnu~Naresh Boddeti.
\newblock Adversarial representation learning with closed-form solvers.
\newblock In \emph{Joint European Conference on Machine Learning and Knowledge Discovery in Databases}, 2021.

\bibitem[Sadeghi et~al.(2022)Sadeghi, Dehdashtian, and Boddeti]{sadeghi2022on}
Bashir Sadeghi, Sepehr Dehdashtian, and Vishnu Boddeti.
\newblock On characterizing the trade-off in invariant representation learning.
\newblock \emph{Transactions on Machine Learning Research}, 2022.
\newblock Featured Certification.

\bibitem[Schuhmann et~al.(2022)Schuhmann, Beaumont, Vencu, Gordon, Wightman, Cherti, Coombes, Katta, Mullis, Wortsman, et~al.]{schuhmann2022laion}
Christoph Schuhmann, Romain Beaumont, Richard Vencu, Cade Gordon, Ross Wightman, Mehdi Cherti, Theo Coombes, Aarush Katta, Clayton Mullis, Mitchell Wortsman, et~al.
\newblock Laion-5b: An open large-scale dataset for training next generation image-text models.
\newblock \emph{Advances in Neural Information Processing Systems}, 35:\penalty0 25278--25294, 2022.

\bibitem[Wang et~al.(2019)Wang, Zhao, Yatskar, Chang, and Ordonez]{wang2019balanced}
Tianlu Wang, Jieyu Zhao, Mark Yatskar, Kai-Wei Chang, and Vicente Ordonez.
\newblock Balanced datasets are not enough: Estimating and mitigating gender bias in deep image representations.
\newblock In \emph{IEEE/CVF International Conference on Computer Vision}, 2019.

\bibitem[Wang et~al.(2020)Wang, Qinami, Karakozis, Genova, Nair, Hata, and Russakovsky]{wang2020towards}
Zeyu Wang, Klint Qinami, Ioannis~Christos Karakozis, Kyle Genova, Prem Nair, Kenji Hata, and Olga Russakovsky.
\newblock Towards fairness in visual recognition: Effective strategies for bias mitigation.
\newblock In \emph{IEEE/CVF Conference on Computer Vision and Pattern Recognition}, 2020.

\bibitem[Wightman(2019)]{rw2019timm}
Ross Wightman.
\newblock Pytorch image models.
\newblock \url{https://github.com/rwightman/pytorch-image-models}, 2019.

\bibitem[Xie et~al.(2017)Xie, Dai, Du, Hovy, and Neubig]{xie2017controllable}
Qizhe Xie, Zihang Dai, Yulun Du, Eduard Hovy, and Graham Neubig.
\newblock Controllable invariance through adversarial feature learning.
\newblock \emph{Advances in Neural Information Processing Systems}, 2017.

\bibitem[Zemel et~al.(2013)Zemel, Wu, Swersky, Pitassi, and Dwork]{zemel2013learning}
Rich Zemel, Yu Wu, Kevin Swersky, Toni Pitassi, and Cynthia Dwork.
\newblock Learning fair representations.
\newblock \emph{International Conference on Machine Learning}, 2013.

\bibitem[Zhao(2021)]{zhao2021costs}
Han Zhao.
\newblock Costs and benefits of wasserstein fair regression.
\newblock \emph{arXiv preprint arXiv:2106.08812}, 2021.

\bibitem[Zhao and Gordon(2019)]{zhao2019inherent}
Han Zhao and Geoffrey~J Gordon.
\newblock Inherent tradeoffs in learning fair representations.
\newblock \emph{arXiv preprint arXiv:1906.08386}, 2019.

\bibitem[Zhao et~al.(2019)Zhao, Chi, Tian, and Gordon]{zhao2019trade}
Han Zhao, Jianfeng Chi, Yuan Tian, and Geoffrey~J Gordon.
\newblock Trade-offs and guarantees of adversarial representation learning for information obfuscation.
\newblock \emph{arXiv preprint arXiv:1906.07902}, 2019.

\end{thebibliography}
}

\newpage
\onecolumn
\newpage
\onecolumn
\section{Appendix\label{sec:appendix}}
In our main paper, we introduced two types of utility-fairness trade-offs and proposed a method, \methodName, to estimate them. Here, we provide some additional analysis to support our main results. The supplementary material is structured as follows:

\begin{enumerate}
    \item Representation Disentanglement in (\S\ref{sec:app:disentanglement})
    \item Training Process of \methodName\ in (\S\ref{sec:app:training})
    \item Implementation Details in (\S\ref{sec:app:implementation-details})
    \item Evaluation Metrics in (\S\ref{sec:app:metrics})
    \item Weighted Normalized Euclidean Distance in (\S\ref{sec:app:norm-euc})
    \item Proofs of closed-form solutions for different notions of fairness (\S\ref{sec:app:proofs})
\end{enumerate}

\subsection{Disentanglement of the Representation \label{sec:app:disentanglement}}
A common objective of learned representations is compactness~\cite{bengio2013representation} to avoid learning representations with redundant information where different dimensions are highly correlated. Therefore, going beyond the assumption that each component of $\bm f(\cdot )$ (i.e., $f_j(\cdot)$) belongs to a universal RKHS $\mathcal H_X$, we impose additional constraints on the representation. Specifically, we constrain the search space of the encoder $\bm f(\cdot)$ to learn a disentangled representation~\cite{bengio2013representation} as %
\begin{eqnarray}\label{eq:A}
\mathcal A_r:=\Big\{\left(f_1,\cdots, f_r\right) \,\big|\, f_i, f_j\in \mathcal H_{\tilde{X}},\, \cov \left(f_i({\tilde{X}}), f_j({\tilde{X}}) \right)
+\,\gamma\, \langle f_i, f_j\rangle_{\mathcal H_{\tilde{X}}}=\delta_{i,j}\Big\},
\end{eqnarray}
where the $\cov \big(f_i(X), f_j(X) \big)$ part enforces the covariance of $Z$ to be an identity matrix. This kind of disentanglement is used in PCA and encourages the variance of each entry of $Z$ to be bounded and different entries of $Z$ are uncorrelated to each other. The regularization part, $\gamma\,\langle f_i, f_j\rangle_{\mathcal H_X}$ encourages the encoder components to be as orthogonal as possible to each other and to be of the unit norm, and aids with numerical stability during empirical estimation~\cite{fukumizu2007statistical}.

\subsection{Training Process of \methodName \label{sec:app:training}}

\RestyleAlgo{ruled}
\SetKwComment{Comment}{/* }{ */}
\begin{algorithm*}[ht]
\caption{\methodName\ Training Process\label{alg:training}}

{\small \textbf{Input:} $\bm X \in \mathbb{R}^{n \times h \times w \times c}$, $\bm Y \in \mathbb{R}^{n \times |Y|}$, $\bm S \in \mathbb{R}^{n \times |S|}$, $m \in \mathbb{N}$} \\

{\small \textbf{Output:} $\bm f_{FE}$, $\bm f_{Enc}$} \\ 

{\small \textbf{Initialize:}} \\

{\small $i \gets 0$\;}
$ \bm f_{FE} \gets \text{Random Initialization};$

\Comment*[l]{Train Feature Extractor and Encoder}
\While{$i < m$}{
$ \bm f_{Enc} \gets \sup_{\bm f_{Enc} \in \mathcal A_r} \{J^{\text{emp}}(\bm f(X; \bm \Theta_{Enc}))\}$\Comment*[r]{solve (6)}

$ \bm f_{FE} \gets \text{SGD} \Big\{ \sup_{\bm f_{FE}} \{J^{\text{emp}}(\bm f(X; \bm \Theta_{FE}))\} \Big\}$ \Comment*[r]{solve (2)}

{\small $i \gets i + 1$}
}

$ \bm f_{CLF} \gets \text{SGD} \Big\{ \sup_{\bm f_{CLF}} \{J(\bm f(X; \bm \Theta_{CLF}), \bm Y)\} \Big\}$\Comment*[r]{Train Classifier}

\end{algorithm*}

\FPset\figlinewidth{1.0}

\begin{figure*}[ht]
  \centering
  \scalebox{0.85}{
  \begin{tikzpicture}[node distance=3cm, every node/.style={font=\sffamily}]
    \node[anchor=center](input){\includegraphics[width=2.5cm]{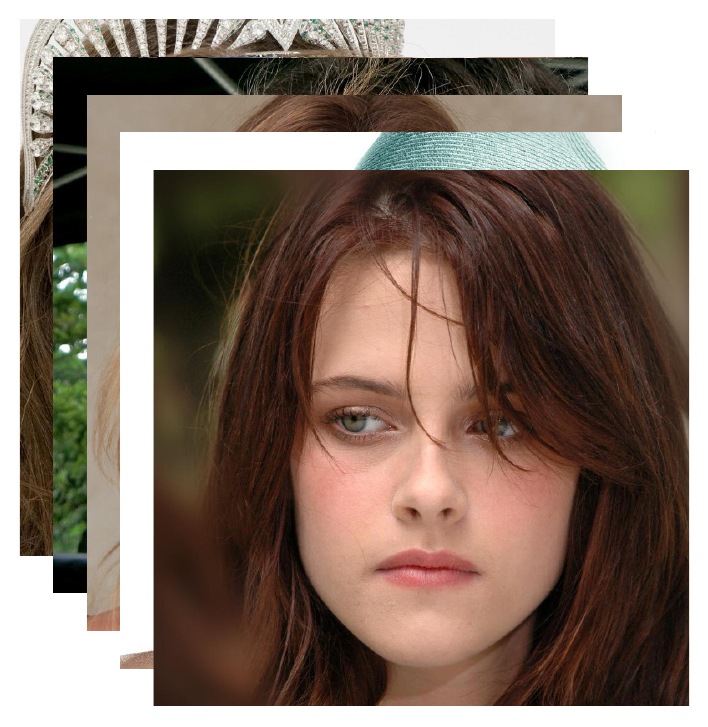}};
    \node[above] at ([yshift=0.5em, xshift=0em]input.north) {\textbf{Phase 1}};
    
    \node[draw=black, fill=fillcolor2, line width=\figlinewidth pt, rectangle, rounded corners, align=center, anchor=center, right of=input, minimum height=3cm, minimum width=1cm] (fe) {Feature\\Extractor\\$\bm \Theta_{FE}$};
    \node[draw=black, dash pattern=on 10pt off 5pt, fill=fillcolor5, line width=\figlinewidth pt, rectangle, rounded corners, align=center, anchor=center, xshift=4em, right of=fe, minimum height=3.0cm, minimum width=5.5cm] (box) {};
    \node[draw=black, fill=fillcolor3, line width=\figlinewidth pt, rectangle, rounded corners, align=center, anchor=west, minimum height=2.5cm, minimum width=0.5cm] (x_tilde) at ([yshift=0em, xshift=1.5em]box.west) {$\bm \tilde{\bm X}$};
    \node[draw=black, fill=fillcolor, line width=\figlinewidth pt, rectangle, rounded corners, align=center, anchor=center, minimum height=1cm, minimum width=0.5cm] (cfs) at ([yshift=0em, xshift=2em]box.center) {Closed-Form Solver};

    \node[anchor=center, below of=input, yshift=-3em](input2){\includegraphics[width=2.5cm]{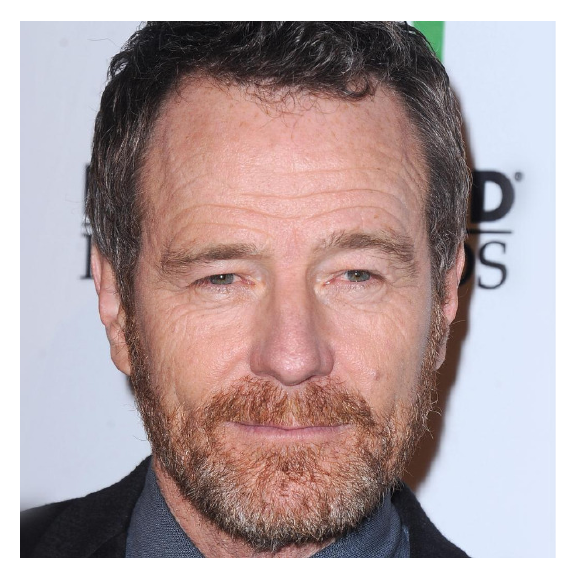}};   
    
    \node[above] at ([yshift=0.5em, xshift=0em]input2.north) {\textbf{Phase 2}};
    
    \node[draw=black, fill=fillcolor, line width=\figlinewidth pt, rectangle, rounded corners, align=center, anchor=center, right of=input2, minimum height=3cm, minimum width=1cm] (fe2) {Feature\\Extractor\\$\bm \Theta_{FE}$};
    
    \node[draw=black, dash pattern=on 10pt off 5pt, fill=fillcolor5, line width=\figlinewidth pt, rectangle, rounded corners, align=center, anchor=center, xshift=4em, right of=fe2, minimum height=3.0cm, minimum width=5.5cm] (box2) {};

    \node[draw=black, fill=fillcolor2, line width=\figlinewidth pt, rectangle, rounded corners, align=center, anchor=west, minimum height=1.3cm, minimum width=0.5cm] (enc) at ([yshift=0em, xshift=1em]box2.west) {Encoder\\$\bm \Theta_{Enc}$};

    \node[draw=black, fill=fillcolor3, line width=\figlinewidth pt, rectangle, rounded corners, align=center, anchor=west, minimum height=2.5cm, minimum width=0.5cm] (z) at ([yshift=0em, xshift=1.2em]enc.east) {$\bm Z$};
    
    \node[draw=black, fill=fillcolor7, line width=\figlinewidth pt, rectangle, rounded corners, align=center, anchor=east, minimum height=0.5cm, minimum width=0.5cm] (depy) at ([yshift=-1.5em, xshift=-0.9em]box2.north east) {$Dep(Z, Y)$};
    \node[draw=black, fill=fillcolor6, line width=\figlinewidth pt, rectangle, rounded corners, align=center, anchor=east, minimum height=0.5cm, minimum width=0.5cm] (deps) at ([yshift=1.5em, xshift=-0.9em]box2.south east) {$Dep(Z, S)$};
    
    \node[draw=black, fill=fillcolor8!60, line width=\figlinewidth pt, circle, align=center, anchor=center, minimum height=0.2cm, minimum width=0.1cm] (loss) at ([yshift=0em, xshift=4em]box2.east) {$\mathcal{L}$};

    \draw[line width=\figlinewidth pt, transform canvas={yshift=8mm},draw] [->] (input) -- (fe);
    \draw[line width=\figlinewidth pt, transform canvas={yshift=0mm},draw] [->] (input) -- (fe);
    \draw[line width=\figlinewidth pt, transform canvas={yshift=-8mm},draw] [->] (input) -- (fe);

    \draw[line width=\figlinewidth pt, transform canvas={yshift=8mm},draw] [->] (fe) -- (box.west);
    \draw[line width=\figlinewidth pt, transform canvas={yshift=0mm},draw] [->] (fe) -- (box.west);
    \draw[line width=\figlinewidth pt, transform canvas={yshift=-8mm},draw] [->] (fe) -- (box.west);
    
    \draw[line width=\figlinewidth pt, transform canvas={yshift=0mm},draw] [->] (x_tilde) -- (cfs);

    \draw[line width=\figlinewidth pt, transform canvas={yshift=0mm},draw, dashed] [->] (box) -- node[right] {$\bm \Theta_{Enc}$} (box2);

    \draw[line width=\figlinewidth pt, transform canvas={yshift=0mm},draw] [->] (input2) -- node[above]{$X$} (fe2);
    \draw[line width=\figlinewidth pt, transform canvas={yshift=0mm},draw] [->] (fe2) --node[above]{$\tilde{X}$} (box2.west);
    \draw[line width=\figlinewidth pt, transform canvas={yshift=0mm},draw] [->] (enc) -- (z);
    \draw[line width=\figlinewidth pt, transform canvas={yshift=0mm},draw] [->] (z) -- (depy.west);
    \draw[line width=\figlinewidth pt, transform canvas={yshift=0mm},draw] [->] (z) -- (deps.west);
    
    \draw[line width=\figlinewidth pt, transform canvas={yshift=0em},draw] [->] ([yshift=-1.5em]box2.north east) --  node[above, sloped] {$\scalemath{0.8}{\times -1}$}  (loss.north west);
    \draw[line width=\figlinewidth pt, transform canvas={yshift=0mm},draw] [->] ([yshift=1.5em]box2.south east) --  node[below, sloped] {$\scalemath{0.8}{\times \frac{\lambda}{1-\lambda}}$}  (loss.south west);

     \def\w {0.35}
     \node[draw=black, fill=fillcolor2, line width=\figlinewidth pt, rectangle, align=center, anchor=south, minimum height=\w cm, minimum width=\w cm] (leg1) at ([yshift=3em, xshift=2.5em]box.east) {};
     \node[right] (txt1) at ([yshift=0em, xshift=0em]leg1.east) {Frozen Params};
     
     \node[draw=black, fill=fillcolor, line width=\figlinewidth pt, rectangle, align=center, anchor=south, minimum height=\w cm, minimum width=\w cm] (leg2) at ([yshift=-1.5em, xshift=0.0em]leg1.south) {};
     \node[right] (txt2) at ([yshift=0em, xshift=0em]leg2.east) {Trainable Params};
     
     \node[draw=black, fill=fillcolor3, line width=\figlinewidth pt, rectangle, align=center, anchor=south, minimum height=\w cm, minimum width=\w cm] (leg3) at ([yshift=-1.5em, xshift=0.0em]leg2.south) {};
     \node[right] (txt3) at ([yshift=0em, xshift=0em]leg3.east) {Feature Space};
     
     \node[draw=black, fill=fillcolor8!60, line width=\figlinewidth pt, rectangle, align=center, anchor=south, minimum height=\w cm, minimum width=\w cm] (leg4) at ([yshift=-1.5em, xshift=0.0em]leg3.south) {};
     \node[right] (txt4) at ([yshift=0em, xshift=0em]leg4.east) {Loss Function};
     
     \node[draw=black, fill=fillcolor6, line width=\figlinewidth pt, rectangle, align=center, anchor=south, minimum height=\w cm, minimum width=\w cm] (leg5) at ([yshift=-1.5em, xshift=0.0em]leg4.south) {};
     \node[right] (txt5) at ([yshift=0em, xshift=0em]leg5.east) {$\bm \downarrow$};
     
     \node[draw=black, fill=fillcolor7, line width=\figlinewidth pt, rectangle, align=center, anchor=south, minimum height=\w cm, minimum width=\w cm] (leg6) at ([yshift=-1.5em, xshift=0.0em]leg5.south) {};
     \node[right] (txt6) at ([yshift=0em, xshift=0em]leg6.east) {$\bm \uparrow$};
     
  \end{tikzpicture}
  }
  \caption{Training process of \methodName\ contains two phases. \textbf{Phase 1:} The closed-form solution for the encoder is calculated using the features generated by the feature extractor while its parameters are frozen. \textbf{Phase 2:} The feature extractor is trained using the loss provided by the calculated encoder parameters from Phase 1.}
  \label{fig:model-overview}
\end{figure*}

\cref{fig:model-overview} shows an overview of the training process of \methodName\ which includes two phases. In the first phase, the features of the training samples are extracted and used to find a closed-form solution for the encoder to maximize the objective function in (4) while the parameters of the feature extractor ($\bm \Theta_{FE}$) are frozen. In the second phase, the feature extractor is trained by updating its parameters using SGD in order to maximize (4) while the encoder is frozen. These two phases are repeated until convergence. These details are also mentioned in \cref{alg:training}.

\subsection{Implementation Details\label{sec:app:implementation-details}}
In training all of the methods, we pick different values of the fairness control parameter ($\lambda$) between zero and one to obtain the trade-offs. Moreover, each experiment is run for 5 different random seeds. For datasets that contain image data, we used the first two blocks of ResNet18 \cite{he2016deep} and put a fully connected layer with 2048 neurons as the last layer of the feature extractor. We used an embedding layer for the dataset with tabular data to map the raw data into an embedding space. 
A 3-layer MLP is used as the target classifier network for all datasets and models. 
For both FolkTables and CelebA datasets, the number of dimensions of RFF is set to 1000. 
In the training phase, the cosine annealing scheduler \cite{loshchilov2016sgdr} is used for scheduling the learning rate. The dimension of representations ($r$) is chosen $c - 1$ where $c$ is the number of target attribute's classes. To improve training stability, we normalize the feature extractor's output $\tilde{X}$. 
These implementation details are summarized in \cref{tab:implementation-details}.
\begin{table}[ht]
    \centering
    \begin{tabular}{lccc}
    \toprule
    Dataset    & RFF Dim.  & $r$ & Training samples \\
    \midrule
    CelebA     &  1000 & 1 & 182,637\\
    FairFace   &  1000 & 1 & 86,744\\
    FolkTables &  1000 & 1 & 75,745 \\
    \bottomrule
    \end{tabular}
    \caption{Implementation details of \methodName\ for each dataset.}
    \label{tab:implementation-details}
\end{table}

\subsection{Evaluation Metrics\label{sec:app:metrics}}
For measuring the utility of the target prediction, we use the accuracy of the classification task. Furthermore, we use equality of odds difference
(EOOD) or equal opportunity difference (EOD). EOOD\cite{hardt2016equality} is defined as
\begin{equation}
\label{eq:eood}
\footnotesize{
\begin{aligned}
    \text{EOOD} := \left| P\left( \hat{Y} = 1 | S = 0, Y = y \right) - P\left( \hat{Y} = 1 | S = 1, Y = y \right) \right|,
\end{aligned}}
\end{equation}
where $y \in \{0, 1, \cdots, |Y|-1\}$, $\hat{Y}$ is the predicted label, and $S$ is the sensitive attribute. According to this criterion, the model should exhibit similar prediction error rates for different groups, irrespective of their sensitive attributes.
EOD\cite{hardt2016equality} can also be defined as
\begin{equation}
\label{eq:eod}
\footnotesize{
\begin{aligned}
    \text{EOD} := \left| P\left( \hat{Y} = 1 | S = 0, Y = 1 \right) - P\left( \hat{Y} = 1 | S = 1, Y = 1 \right) \right|
\end{aligned}}
\end{equation}
This is a relaxation of EOOD for the case of binary target tasks. This metric indicates that the model should provide equal opportunities for positive predictions for individuals with the same true outcome, regardless of their sensitive attributes.

\subsection{Weighted Normalized Euclidean Distance \label{sec:app:norm-euc}}
In the main paper, to compare methods based on their point-to-point distance to LST and DST, we use two \emph{weighted normalized Euclidean distance} defined as:
\begin{equation}
\label{eq:dist-lst}
\footnotesize{
\begin{aligned}
    \text{Dist}_\text{LST}(x^i) = \sqrt{w \cdot \left(\frac{\text{LST}_f - x^i_f}{\max_f} \right)^2 + (1 - w) \cdot \left(\frac{\text{LST}_\text{Acc} - x^i_\text{Acc}}{\max_\text{Acc} } \right)^2}
\end{aligned}}
\end{equation}

\begin{equation}
\label{eq:dist-dst}
\footnotesize{
\begin{aligned}
    \text{Dist}_\text{DST}(x^i) = \sqrt{w \cdot \left(\frac{\text{DST}_f - x^i_f}{\max_f} \right)^2 + (1 - w) \cdot \left(\frac{\text{DST}_\text{Acc} - x^i_\text{Acc}}{\max_\text{Acc} } \right)^2}
\end{aligned}}
\end{equation}
where $f \in \mathcal{F}$ is the fairness metric and $\mathcal{F} = \left\{\text{EOD}, \text{EOOD}, \text{DPV}\right\}$, $w$ is the control parameter that adjusts the weights of each term ---fairness distance and accuracy distance---in the overall distance. For calculating distances in Table 1 of the main paper, we choose $w = 0.5$ which means that the distances in the fairness axis and distances in the accuracy axis are equally important to us.

\subsection{Solutions for Different Notions of Fairness \label{sec:app:proofs}}

\subsubsection{Proof of Theorem 1 for EO}
\label{sec:eo-emp}
\begin{theorem1}
Let the Cholesky factorization of $\bm K_X$ be $\bm K_X=\bm L_X \bm L_X^T$,  where $\bm L_X\in \mathbb R^{n\times d}$ ($d\le n$) is a full column-rank matrix. Let $r\le d$, then a solution to (4) is 
\begin{eqnarray}
\bm f^{\text{opt}}(X) =
\bm \Theta^{\text{opt}}
\left[k_X(\bm x_1, X),\cdots, k_X(\bm x_n, X)\right]^T\nn
\end{eqnarray}
where $\bm \Theta^{\text{opt}}=\bm U^T \bm L_X^\dagger$
and the columns of $\bm U$
are eigenvectors corresponding to the $r$ largest eigenvalues of the following generalized eigenvalue problem.
\begin{eqnarray}
\bm \left( (1-\lambda) \frac{1}{n^2} \bm L_{X}^T \bm H \bm K_{Y} \bm H \bm L_{X} -\lambda \frac{1}{n_0^2} \bm L_{X}[Y=y_0]^T \bm H \bm K_{S}[Y=y_0] \bm H \bm L_{X}[Y=y_0] \right) \bm u = \tau \left(\frac{1}{n}\,\bm L^T_X \bm H \bm L_X + \gamma \bm I\right) \bm u.
\end{eqnarray}
\end{theorem1}
\begin{proof}
Consider the Cholesky factorization, $\bm K_X=\bm L_X \bm L_X^T$ where $\bm L_X$ is a full column-rank matrix. Using the representer theorem, the disentanglement property in ~\eqref{eq:A} can be expressed as
\begin{eqnarray}
&&\cov \left(f_i(X),\,f_j(X) \right) + \gamma\, \langle f_i, f_j \rangle_{\mathcal H_{X}}\nn\\
&=& \frac{1}{n}\sum_{k=1}^nf_i(\bm x_k) f_j(\bm x_k) -\frac{1}{n^2}\sum_{k=1}^n f_i(\bm x_k)\sum_{m=1}^n f_j(\bm x_m) + \gamma\, \langle f_i, f_j \rangle_{\mathcal H_{X}} \nn\\
&=&\frac{1}{n}\sum_{k=1}^n \sum_{t=1}^n \bm K_{X} (\bm x_k, \bm x_t)\theta_{i t}
\sum_{m=1}^n \bm K_{X} (\bm x_k, \bm x_m)\theta_{j m}-\frac{1}{n^2}\bm \theta_i^T \bm K_{X} \bm 1_n \bm 1_n^T \bm K_{X}\bm \theta_j+\gamma\, \langle f_i, f_j \rangle_{\mathcal H_{X}}\nn\\
&=& \frac{1}{n} \left(\bm K_{X} \bm \theta_i\right)^T
\left(\bm K_X \bm \theta_j\right)-\frac{1}{n^2}\bm \theta_i^T \bm K_{X} \bm 1_n \bm 1_n^T \bm K_{X}\bm \theta_j+\gamma\,
\left\langle \sum_{k=1}^n \theta_{ik}k_{X}(\cdot, \bm x_k), \sum_{t=1}^n \theta_{it}k_{X}(\cdot, \bm x_t)\right\rangle_{\mathcal H_{X}} \nn\\
&=& \frac{1}{n} \bm \theta_i^T \bm K_{X}\bm H \bm K_{X} \bm \theta_j
+ \gamma\,\bm \theta^T_i \bm K_{X} \bm \theta_j\nn\\
&=& \frac{1}{n} \bm \theta_i^T \bm L_{X} \left(\bm L^T_{X}\bm H \bm L_{X} + n\gamma\,\bm I\right) \bm L^T_{X} \bm \theta_j\nn\\
&=&\delta_{i,j}.\nn
\end{eqnarray}
As a result, $\bm f\in \mathcal A_r$ is equivalent to 
\begin{eqnarray}
\bm \Theta \bm L_{X} \underbrace{\Big( \frac{1}{n}\bm L^T_{X}\bm H \bm L_{X} + \gamma\bm I\Big)}_{:=\bm C} \bm L^T_{X} \bm \Theta^T= \bm I_r\nn,
\end{eqnarray}
where $\bm \Theta:=\big[ \bm \theta_1,\cdots, \bm \theta_r\big]^T\in \mathbb R^{r\times n}$.

Let $\bm V = \bm L_{X}^T\bm \Theta ^T $ and consider the optimization problem in~(13): 
 \begin{eqnarray}\label{eq:trace-eo}
 &&\sup_{\bm f \in \mathcal A_r} \left\{(1-\lambda)\,\text{Dep}^{\text{emp}}(\bm f(X), Y) -
\lambda\, \text{Dep}^{\text{emp}}(\bm f(X), S | Y=1)\right\}\nn\\
 &=&\sup_{\bm f \in \mathcal A_r} \left\{(1-\lambda)\frac{1}{n^2}\left\|\bm \Theta \bm K_{X} \bm H \bm L_{Y} \right\|^2_F
 -\lambda\, \frac{1}{n_0^2}\left\|\bm \Theta \bm K_{X}[Y=y_0] \bm H \bm L_{S_0} \right\|^2_F\nn\right\}\nn\\
  &=&\sup_{\bm f \in \mathcal A_r } \left\{(1-\lambda)\frac{1}{n^2}\,\text{Tr}\left\{\bm \Theta \bm K_{X} \bm H \bm K_{Y} \bm H \bm K_{X}\bm \Theta^T\right\}
 -\lambda \,\frac{1}{n_0^2}\Tr{\bm \Theta \bm K_{X}[Y=y_0] \bm H \bm K_{S_0} \bm H \bm K_{X}[Y=y_0]^T \bm \Theta^T}\right\}\nn\\
  &=&\max_{\bm V^T \bm C \bm V = \bm I_r}  \text{Tr} \left\{\bm\Theta \bm L_{X}  \bm B \bm L_{X}^T \bm \Theta^T\right\}\nn\\
&=&\max_{\bm V^T \bm C \bm V = \bm I_r}  \text{Tr} \left\{ \bm V^T  \bm B \bm V \right\}
 \end{eqnarray}
where the second step is due to (3) and
\begin{eqnarray}
\bm B&:=& \left( (1-\lambda) \frac{1}{n^2} \bm L_{X}^T \bm H \bm K_{Y} \bm H \bm L_{X} -\lambda \frac{1}{n_0^2} \bm L_{X}[Y=y_0]^T \bm H \bm K_{S}[Y=y_0] \bm H \bm L_{X}[Y=y_0] \right)\nn
\end{eqnarray}
It is shown in~\cite{kokiopoulou2011trace} that an\footnote{Optimal $\bm V$ is not unique.} optimizer of~(\ref{eq:trace-eo}) is any matrix $\bm U$ whose columns are eigenvectors corresponding to $r$ largest eigenvalues of generalized problem
\begin{eqnarray}\label{eq:eig-gen-proof-eo}
\bm B \bm u = \tau \,\bm C \bm u 
\end{eqnarray}
and the maximum value is the summation of $r$ largest eigenvalues. Once $\bm U$ is determined, then, any $\bm \Theta$ in which $\bm L_{X}^T\bm \Theta^T = \bm U$ is optimal $\bm \Theta$ (denoted by $\bm \Theta^{\text{opt}}$).
Note that $\bm \Theta^{\text{opt}}$ is not unique and has a general form of
\begin{eqnarray}
\bm \Theta^T = \left( \bm L_{X}^T\right)^\dagger \bm U + \bm \Lambda_0, \quad  \mathcal R(\bm \Lambda_0)\subseteq \mathcal N \left( \bm L^T_{X}\right).\nn
\end{eqnarray}
However, setting $\bm \Lambda_0$ to zero would lead to minimum norm for  $\bm \Theta$. Therefore, we opt $\bm \Theta^{\text{opt}}=\bm U^T \bm L_{X}^\dagger$.
\end{proof}

\subsubsection{Proof of Theorem 1 for EOO}
\label{sec:eoo-emp}
\begin{theorem1}
Let the Cholesky factorization of $\bm K_X$ be $\bm K_X=\bm L_X \bm L_X^T$,  where $\bm L_X\in \mathbb R^{n\times d}$ ($d\le n$) is a full column-rank matrix. Let $r\le d$, then a solution to (4) is 
\begin{eqnarray}
\bm f^{\text{opt}}(X) =
\bm \Theta^{\text{opt}}
\left[k_X(\bm x_1, X),\cdots, k_X(\bm x_n, X)\right]^T\nn
\end{eqnarray}
where $\bm \Theta^{\text{opt}}=\bm U^T \bm L_X^\dagger$
and the columns of $\bm U$
are eigenvectors corresponding to the $r$ largest eigenvalues of the following generalized eigenvalue problem.
\begin{eqnarray}
\bm \left( (1-\lambda) \frac{1}{n^2} \bm L_{X}^T \bm H \bm K_{Y} \bm H \bm L_{X} -\lambda \sum_{y = 0}^{c_y - 1} \frac{1}{n_y^2} \bm L_{X}[Y=y]^T \bm H \bm K_{S}[Y=y] \bm H \bm L_{X}[Y=y] \right) \bm u = \tau \left(\frac{1}{n}\,\bm L^T_X \bm H \bm L_X + \gamma \bm I\right) \bm u.
\end{eqnarray}
\end{theorem1}
\begin{proof}
Consider the Cholesky factorization, $\bm K_X=\bm L_X \bm L_X^T$ where $\bm L_X$ is a full column-rank matrix. Using the representer theorem, the disentanglement property in ~\eqref{eq:A} can be expressed as
\begin{eqnarray}
&&\cov \left(f_i(X),\,f_j(X) \right) + \gamma\, \langle f_i, f_j \rangle_{\mathcal H_{X}}\nn\\
&=& \frac{1}{n}\sum_{k=1}^nf_i(\bm x_k) f_j(\bm x_k) -\frac{1}{n^2}\sum_{k=1}^n f_i(\bm x_k)\sum_{m=1}^n f_j(\bm x_m) + \gamma\, \langle f_i, f_j \rangle_{\mathcal H_{X}} \nn\\
&=&\frac{1}{n}\sum_{k=1}^n \sum_{t=1}^n \bm K_{X} (\bm x_k, \bm x_t)\theta_{i t}
\sum_{m=1}^n \bm K_{X} (\bm x_k, \bm x_m)\theta_{j m}-\frac{1}{n^2}\bm \theta_i^T \bm K_{X} \bm 1_n \bm 1_n^T \bm K_{X}\bm \theta_j+\gamma\, \langle f_i, f_j \rangle_{\mathcal H_{X}}\nn\\
&=& \frac{1}{n} \left(\bm K_{X} \bm \theta_i\right)^T
\left(\bm K_X \bm \theta_j\right)-\frac{1}{n^2}\bm \theta_i^T \bm K_{X} \bm 1_n \bm 1_n^T \bm K_{X}\bm \theta_j+\gamma\,
\left\langle \sum_{k=1}^n \theta_{ik}k_{X}(\cdot, \bm x_k), \sum_{t=1}^n \theta_{it}k_{X}(\cdot, \bm x_t)\right\rangle_{\mathcal H_{X}} \nn\\
&=& \frac{1}{n} \bm \theta_i^T \bm K_{X}\bm H \bm K_{X} \bm \theta_j
+ \gamma\,\bm \theta^T_i \bm K_{X} \bm \theta_j\nn\\
&=& \frac{1}{n} \bm \theta_i^T \bm L_{X} \left(\bm L^T_{X}\bm H \bm L_{X} + n\gamma\,\bm I\right) \bm L^T_{X} \bm \theta_j\nn\\
&=&\delta_{i,j}.\nn
\end{eqnarray}
As a result, $\bm f\in \mathcal A_r$ is equivalent to 
\begin{eqnarray}
\bm \Theta \bm L_{X} \underbrace{\Big( \frac{1}{n}\bm L^T_{X}\bm H \bm L_{X} + \gamma\bm I\Big)}_{:=\bm C} \bm L^T_{X} \bm \Theta^T= \bm I_r\nn,
\end{eqnarray}
where $\bm \Theta:=\big[ \bm \theta_1,\cdots, \bm \theta_r\big]^T\in \mathbb R^{r\times n}$.

Let $\bm V = \bm L_{X}^T\bm \Theta ^T $ and consider the optimization problem in~(13): 
 \begin{eqnarray}\label{eq:trace-eoo}
 &&\sup_{\bm f \in \mathcal A_r} \left\{(1-\lambda)\,\text{Dep}^{\text{emp}}(\bm f(X), Y) -
\lambda\, \sum_{y=0}^{c_y-1} \text{Dep}^{\text{emp}}(\bm f(X), S | Y=y)\right\}\nn\\
 &=&\sup_{\bm f \in \mathcal A_r} \left\{(1-\lambda)\frac{1}{n^2}\left\|\bm \Theta \bm K_{X} \bm H \bm L_{Y} \right\|^2_F
 -\lambda\, \sum_{y=0}^{c_y-1} \frac{1}{n_y^2}\left\|\bm \Theta \bm K_{X}[Y=y] \bm H \bm L_{S_y} \right\|^2_F\nn\right\}\nn\\
  &=&\sup_{\bm f \in \mathcal A_r } \left\{(1-\lambda)\frac{1}{n^2}\,\text{Tr}\left\{\bm \Theta \bm K_{X} \bm H \bm K_{Y} \bm H \bm K_{X}\bm \Theta^T\right\}
 -\lambda \, \sum_{y=0}^{c_y-1} \frac{1}{n_y^2}\Tr{\bm \Theta \bm K_{X}[Y=y] \bm H \bm K_{S_y} \bm H \bm K_{X}[Y=y]^T \bm \Theta^T}\right\}\nn\\
  &=&\max_{\bm V^T \bm C \bm V = \bm I_r}  \text{Tr} \left\{\bm\Theta \bm L_{X}  \bm B \bm L_{X}^T \bm \Theta^T\right\}\nn\\
&=&\max_{\bm V^T \bm C \bm V = \bm I_r}  \text{Tr} \left\{ \bm V^T  \bm B \bm V \right\}
 \end{eqnarray}
where the second step is due to (3) and
\begin{eqnarray}
\bm B&:=& \left( (1-\lambda) \frac{1}{n^2} \bm L_{X}^T \bm H \bm K_{Y} \bm H \bm L_{X} -\lambda \sum_{y=0}^{c_y-1} \frac{1}{n_y^2} \bm L_{X}[Y=y]^T \bm H \bm K_{S}[Y=y] \bm H \bm L_{X}[Y=y] \right)\nn
\end{eqnarray}
It is shown in~\cite{kokiopoulou2011trace} that an\footnote{Optimal $\bm V$ is not unique.} optimizer of~(\ref{eq:trace-eoo}) is any matrix $\bm U$ whose columns are eigenvectors corresponding to $r$ largest eigenvalues of generalized problem
\begin{eqnarray}\label{eq:eig-gen-proof-eoo}
\bm B \bm u = \tau \,\bm C \bm u 
\end{eqnarray}
and the maximum value is the summation of $r$ largest eigenvalues. Once $\bm U$ is determined, then, any $\bm \Theta$ in which $\bm L_{X}^T\bm \Theta^T = \bm U$ is optimal $\bm \Theta$ (denoted by $\bm \Theta^{\text{opt}}$).
Note that $\bm \Theta^{\text{opt}}$ is not unique and has a general form of
\begin{eqnarray}
\bm \Theta^T = \left( \bm L_{X}^T\right)^\dagger \bm U + \bm \Lambda_0, \quad  \mathcal R(\bm \Lambda_0)\subseteq \mathcal N \left( \bm L^T_{X}\right).\nn
\end{eqnarray}
However, setting $\bm \Lambda_0$ to zero would lead to minimum norm for  $\bm \Theta$. Therefore, we opt $\bm \Theta^{\text{opt}}=\bm U^T \bm L_{X}^\dagger$.
\end{proof}

\end{document}